  \providecommand\BibTeX{{%
    \normalfont B\kern-0.5em{\scshape i\kern-0.25em b}\kern-0.8em\TeX}}}
\definecolor{blue(pigment)}{rgb}{0,0 , 0}
\newcommand{\updated}[1]{\textcolor{black}{{#1}}}
\newcommand{\newtxt}[1]{\textcolor{black}{{#1}}}
\newcommand{\updatednew}[1]{\textcolor{blue(pigment)}{{#1}}}
\newcommand{\datasetA}{MovieLens\xspace}
\newcommand{\datasetB}{Epinion\xspace}
\newcommand{\datasetC}{Gowalla\xspace}
\newcommand{\datasetD}{LastFM\xspace}
\newcommand{\usergroupA}{active\xspace}
\newcommand{\usergroupB}{inactive\xspace}
\newcommand{\itemgroupA}{short-head\xspace}
\newcommand{\itemgroupB}{long-tail\xspace}
\newcommand{\dquotes}[1]{``#1''}
\newcommand{\squotes}[1]{`#1'}
\newcommand{\eg}{e.g., }
\newcommand{\ie}{i.e., }
\newcommand{\wrt}{w.r.t.~}
\newcommand{\mycc}{\color{black}}%
\newcommand{\partitle}[1]{\vspace{2mm}\noindent\textbf{#1}}
\DeclareMathOperator{\E}{\mathbb{E}}
\begin{document}

\title[Personalized Consumer and Producer Group Fairness]{A \newtxt{Personalized} Framework for Consumer and Producer Group Fairness Optimization in Recommender Systems} 


\author{Hossein A.~Rahmani}
\affiliation{%
  \institution{University College London}
  \city{London}
  \country{UK}}
\email{hossein.rahmani.22@ucl.ac.uk}

\author{Mohammadmehdi Naghiaei}
\affiliation{%
  \institution{University of Southern California}
  \city{California}
  \country{USA}}
\email{naghiaei@usc.edu}

\author{Yashar Deldjoo}
\affiliation{%
  \institution{Polytechnic University of Bari}
  \city{Bari}
  \country{Italy}}
\email{yashar.deldjoo@poliba.it}

\renewcommand{\shortauthors}{H.~A.~Rahmani, M.~Naghiaei, Y.~Deldjoo}

\begin{abstract}
In recent years, there has been an increasing recognition that when machine learning (ML) algorithms are used to automate decisions, they may mistreat individuals or groups, with legal, ethical, or economic implications. Recommender systems are prominent examples of these machine learning (ML) systems that aid users in making decisions. The majority of past literature research on RS fairness treats user and item fairness concerns independently, ignoring the fact that recommender systems function in a two-sided marketplace. In this paper, we propose \texttt{CP-FairRank}, an optimization-based re-ranking algorithm that seamlessly integrates fairness constraints from both the consumer and producer side in a joint objective framework. The framework is generalizable and may take into account varied fairness settings based on group segmentation, recommendation model selection, and domain, which is one of its key characteristics. For instance, we demonstrate that the system may jointly increase consumer and producer fairness when (un)protected consumer groups are defined on the basis of their~\textit{activity level} and~\textit{main-streamness}, while producer groups are defined according to their popularity level. For empirical validation, through large-scale on eight datasets and four mainstream collaborative filtering (CF) recommendation models, we demonstrate that our proposed strategy is able to improve both consumer and producer fairness without compromising or very little overall recommendation quality, demonstrating the role algorithms may play in avoiding data biases. Our results on different group segmentation also indicate that the amount of improvement can vary and is dependent on group segmentation, indicating that the amount of bias produced and how much the algorithm can improve it depend on the protected group definition, a factor that, to our knowledge, has not been examined in great depth in previous studies but rather is highlighted by the results discovered in this study.
\end{abstract}



\begin{CCSXML}
<ccs2012>
   <concept>
       <concept_id>10002951.10003317.10003347.10003350</concept_id>
       <concept_desc>Information systems~Recommender systems</concept_desc>
       <concept_significance>500</concept_significance>
       </concept>
 </ccs2012>
\end{CCSXML}

\ccsdesc[500]{Information systems~Recommender systems}

\keywords{Responsible IR, Recommender systems, Fairness, Ranking, Bias Mitigation, Consumer and Provider, multi-stakeholder}

\received{20 February 2007}
\received[revised]{12 March 2009}
\received[accepted]{5 June 2009}

\maketitle

\section{Introduction}
\label{sec:introduction}

\begin{figure*}
  \centering

  \subfloat[Percentage of research]
  {
      \includegraphics[scale=0.5]{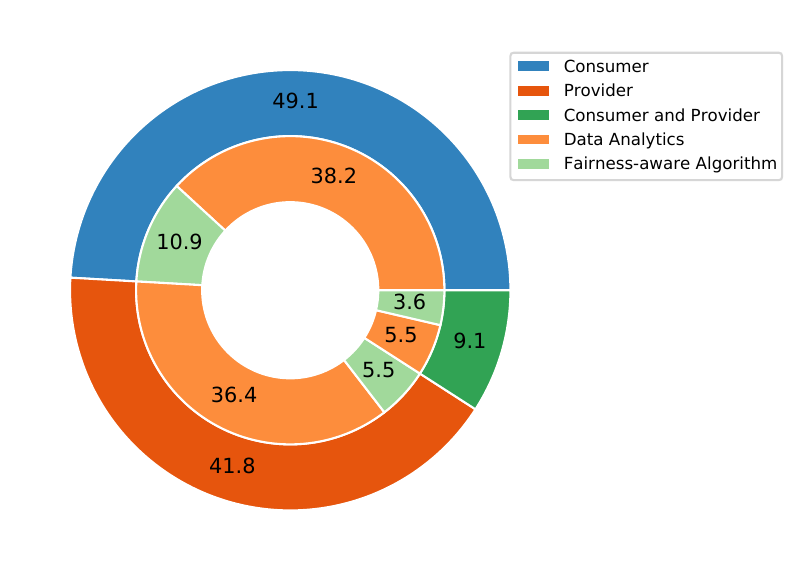}
      \label{fig:paperstatistics}
      
  }
  \hfill
    \subfloat[Evaluation Performance]
  {
      \includegraphics[scale=0.5]{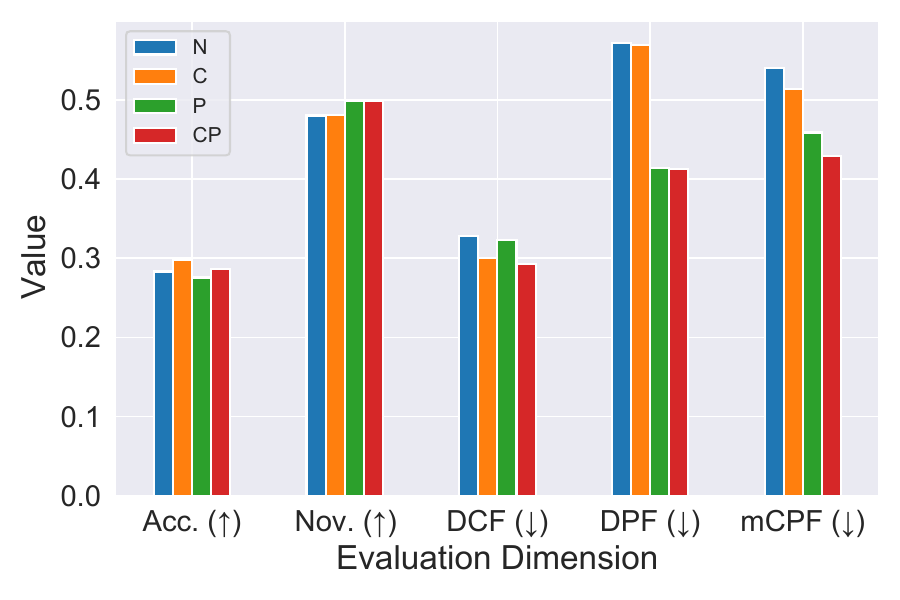}
      \label{fig:introresults}
  }

\caption{(a) The percentage of the research studied different aspects of fairness in recommender systems. (b) \updatednew{N (Fairness-unaware), C (user-oriented), P (item-oriented), and CP (two-sided fairness-performance) of recommendation algorithms on the accuracy, novelty, and fairness performance.} Consumer fairness evaluation (DCF) and producer fairness evaluation (DPF) are the two metrics that make up the mCPF. Note that CP represents the core of our contribution. The results on each bar show the average of 32 experiments across datasets and baseline CF models. \updatednew{$\uparrow$ means the higher the better while $\downarrow$ means the lower the better. Acc., Nov., and mCPF refer to accuracy, novelty, and consumer-producer fairness evaluation, respectively.}}
\label{fig:mm}
\end{figure*}

Recommender systems are ubiquitous and support high-stakes decisions in a variety of application contexts such as online marketing, music discovery, and high-stake tasks (\eg job search, candidate screening, and loan applications). By delivering personalized suggestions, these systems have a major impact on the content we consume online, our beliefs and decisions, and the success of businesses. Recently, there has been a growing awareness about the fairness of machine learning (ML) models in automated decision-making tasks, such as classification \cite{mehrabi2021survey,binns2018fairness,salutari2023quantifying,naghiaei2022towards}, and ranking/filtering tasks \cite{chen2020bias,caton2020fairness}, with recommender systems serving as a notable example of the latter. However, unlike classification, fairness in recommender systems (RS) is a multi-faceted subject, depending on stakeholder, type of benefit, context, morality, and time among others~\cite{ekstrand2021fairness,deldjoo2021flexible,burke2017multisided,wang2023improving,rahmani2022role}.

A recurring theme in the fairness recommendation literature is that computing research is often focused on a particular aspect of fairness, framing the problem as a building algorithm that fulfills a specific criterion, such as optimizing for producer fairness~\cite{dong2020user,zhu2021popularity,yalcin2021investigating} or consumer fairness~\cite{dwork2012fairness,islam2021debiasing}. However, this strategy may appear oversimplified, requiring more nuanced and multidisciplinary approaches to research recommendation fairness.

One overarching aspect that may help to unify studies on fairness in multi-stakeholder settings is distinguishing the \textit{(i)} benefit type (exposure vs.~relevance), and (\textit{ii)} the main stakeholders involved in the recommendation setting (consumers vs.~producers). Exposure refers to how evenly items or groups of items are exposed to users or groups of users. Relevance (effectiveness) determines to what extent the items exposition is effective, \ie matches with the user's taste. The second level of the taxonomy is stakeholder. Almost every online platform we interact with (\eg Spotify, Amazon) serves as a marketplace connecting consumers and item producers/service providers, making them the primary end-beneficiaries and stakeholders in RS. From the consumers' perspective, fairness is about distributing effectiveness evenly among users, whereas producers and item providers seeking increased visibility are primarily concerned with exposure fairness. 

Fig.~\ref{fig:paperstatistics} illustrates the distribution of research on various aspects of fairness in recommender systems.\footnote{The figures are based on 120 publications retrieved from DBLP using the keywords ``fair/biased recommendation'', ``re-ranking'', and ``collaborative filtering''.} We may observe a division/split in the research on fairness-aware recommendation algorithms, with around 49.1\% of articles focusing on consumer fairness and somewhat fewer on producer fairness (41.8\%). Few studies (less than 10\%) address consumer and producer fairness concerns simultaneously. However, the underlying user-item interaction coexists on (and can impact) both sides of beneficiary stakeholders. For example, there may be disparities in the consumption of item groups (defined by protected attributes) between active and inactive users in specific domains. Active users of Point-of-Interest (POI) may want to visit a variety of popular and less popular locations, whilst less active users may want to visit mainly the most popular locations. Prior research on unilateral fairness optimization raises various questions, including whether optimizing for one party's advantages, such as consumers, may ensure/protect the interests of other parties (\ie producers) and vice versa? Furthermore, how does optimizing for one stakeholder's benefits (\eg consumers' relevance) affect overall system accuracy? Are there ways to explain the key interplay and trade-offs between fairness restrictions (user relevance, item exposure) and overall system accuracy in various recommendation scenarios? Are there ways to explain the key interplay and trade-offs between fairness restrictions (user relevance, item exposure) and overall system accuracy in various recommendation scenarios?

\updatednew{In response to the questions raised}, this study concentrates on fairness within a \textit{multi-stakeholder} setting. Our goal is to harmonize two facets of fairness: fairness towards consumers and providers. To achieve this, we introduce a model-agnostic re-ranking methodology. This approach is designed to optimize marketplace objectives effectively. It ensures equitable exposure for producers' items and minimizes unequal treatment of different consumer groups, all while maintaining the overall accuracy of the recommender system.

To better illustrate the capabilities of our proposed system, in Fig.~\ref{fig:introresults}, we compare the performance of various recommendation algorithms against unilateral (or single-sided) fairness objectives according to the perspectives of different beneficiary stakeholders.

\updated{In particular, we consider four distinct variants of recommendation algorithms. The first, denoted as N, is a baseline recommendation algorithm that operates without any explicit fairness considerations. The subsequent three, labeled as C, P, and CP, are specific adaptations of our proposed methodology, each addressing a unique fairness objective. These objectives focus on consumer-side fairness in the case of C, provider-side fairness for P, and both objectives simultaneously for CP.
The C variant is closely modeled on the approach put forth by~\citet{li2021user}, albeit with a key modification in our rendition: the parity objectives of users are incorporated directly into the system's framework as an unconstrained optimization problem. To yield an optimal solution within a polynomial timeframe, we have chosen to employ a suitably efficient greedy algorithm.}

Fig.~\ref{fig:introresults} represents the average results of the 128 experimental cases examined in this work, \ie $8 \ \text{(datasets)} \ \times \ 4 \ \text{(CF baselines)} \ \times \ 4 \ (\text{fairness constraints type})$. One may observe the drawbacks of unilateral fairness optimization, in which DCF and DPF are used to measure unfairness (biases) on the consumer and supplier sides, respectively (cf.~Section~\ref{subsec:market_obj}). For example, user-oriented fairness optimization (C-fairness) can enhance both consumer fairness (\ie reducing DCF) and total system accuracy. However, this creates a huge bias against suppliers. Producer-fairness optimization, on the other hand, can improve novelty and producer fairness (\ie reducing DPF), indicating that users' recommendation lists contain a greater number of long-tail items. However, this comes at the cost of a considerable reduction in overall accuracy (from $0.2832$ to $0.2756$) and worsening of DCF (from $0.3280$ to $0.3234$). 
The proposed consumer-provider fairness optimization (CP-fairness) in the present work combines the benefits of previous system variants, enhancing both producer and consumer fairness, resulting in increased accuracy (from $0.2832$ to $0.2868$) and even increased novelty (due to recommendation of more long-tail items). To summarize, the key contributions of this work are as follows:
\begin{itemize}
    \item \textbf{Motivating multi-stakeholder fairness in RS.} We motivate the importance of multi-sided fairness optimization by demonstrating how inherent biases in underlying data may impact both consumer and producer fairness constraints, as well as the overall system's objectives, \ie accuracy and novelty (cf.~Section~\ref{sec:motivating_cp}); these biases and imbalances if left unchecked, could lead to stereotypes, polarization of political opinions, and the loss of emerging business. 
    \item \textbf{CP-Fairness modeling.} Our research builds on and expands earlier work \cite{li2021user}, in several dimensions: \textit{(i)} we consider \textit{multi-stakeholder} objectives including both consumers' and producers' fairness aspects; \textit{(ii)} we formalize the re-ranking problem as an integer programming method without enforcing the fairness objectives as a constraint that is susceptible to reduction in overall accuracy in a \textit{marketplace}; \textit{(iii)} we propose an efficient greedy algorithm capable of achieving an optimal trade-off within \textit{multi-stakeholder} objectives in a polynomial-time (cf.~Section~\ref{sec:proposed_method}). In summary, this makes the proposed contribution more versatile and applicable to various recommendation models and settings irrespective of their learning criteria and their recommendation list size.
    \item \textbf{Experiments.} We conduct extensive experimental evaluations on 8 real-world datasets from diverse domains (\eg Music, Movies, E-commerce), utilizing implicit and explicit feedback preferences (cf.~Section~\ref{sec:experimental_methodology}); this, combined with four recommendation algorithm baseline amount the number of experiments carried out to 128 recommendation simulation.
\end{itemize}

\noindent \updatednew{\textbf{Contributions and enhancements beyond the original version of this work.}
This work builds upon our previous research presented in SIGIR 2022 \cite{naghiaei2022cpfair}, and extends it in the following ways:}

\begin{enumerate}
    \item \updatednew{\textbf{Revisiting User Group Definitions in Recommender Systems:} Building on our previous research in SIGIR 2022 \cite{naghiaei2022cpfair}, this work introduces a novel perspective on the grouping of the users, based on the quality of user activity in terms of product popularity. We put forth the notion that users consuming more mainstream popular products rather than a diverse range can influence collaborative filtering (CF) models to favor popular items.  For instance, if a user predominantly interacts with mainstream, popular products, this preference could inadvertently bias collaborative filtering (CF) models towards suggesting more popular items. This introduces a new dimension of unfairness in recommendation outcomes, which may vary in magnitude and proportion depending on the number and type of activities a user engages in. Consequently, we have reanalyzed all experiments from our SIGIR'22 submission, introducing new fairness scenarios that differentiate between mainstream and non-mainstream users. The objective is to determine if these groups are provided with better recommendation quality and to evaluate if our methods can effectively enhance fairness in these scenarios.}

    \item \updatednew{\textbf{Enhanced Generalizability and New Statistical Insights:} To increase the applicability of our findings, additional analyses have been incorporated, including correlation plots and fairness-accuracy trade-off evaluations for two distinct user groups (refer to Table \ref{tbl:correlation} and Table \ref{tbl:tradeoff}). These new statistical measures provide a better representation of the generality of our findings across various domains.}
\end{enumerate}

\section{Background and Related work}
\label{sec:related_work}
A promising approach to classifying the literature in recommendation fairness is according to the beneficiary stakeholder~\cite{abdollahpouri2020multistakeholder,ekstrand2021fairness,schedl2023fairness}. Based on a recent survey screening results on papers published in fairness during the past years~\cite{deldjoo2023fairness}, only 9.1\% of publications in the field deal with multi-sided fairness, with even a less percentage of $3.6\%$ proposing an actual two-sided fairness-aware algorithm, as pursued in our current study.

\partitle{C-fairness Methods.}
\citet{dwork2012fairness} introduces a framework for individual fairness by including all users in the protected group. \citet{abdollahpouri2019unfairness} and \citet{naghiaei2022unfairness} investigate a user-centered evaluation of popularity bias that accounts for different levels of interest among users toward popular items in movies and book domain. \citet{abdollahpouri2021user} propose a regularization-based framework to mitigate this bias from the user perspective. \citet{li2021user} address the C-fairness problem in e-commerce recommendation from a group fairness perspective, \ie a requirement that protected groups should be treated similarly to the advantaged group or total population \cite{pedreschi2009measuring}. Several recent studies have indicated that an exclusively consumer-centric design approach, in which customers' satisfaction is prioritized over producers' interests, may result in a reduction of the system's overall utility~\cite{patro2020fairrec,wang2021user}. As a result, we include the producer fairness perspectives in our proposed framework as the second objective in this work.

\partitle{P-fairness Methods.}
Several works have studied recommendation fairness from the producer's perspective \cite{yalcin2021investigating,wundervald2021cluster}. \citet{gomez2022provider} assess recommender system algorithms disparate exposure based on producers' continent of production in movie and book recommendation domain and propose an equity-based approach to regulate the exposure of items produced in a continent. \citet{dong2020user} set a constraint to limit the maximum times an item can be recommended among all users proportional to its popularity to enhance the item exposure fairness. In contrast, \citet{ge2021towards} investigate fairness in a dynamic setting where item popularity changes over time in the recommendation process and models the recommendation problem as a constrained Markov Decision Process with a dynamic policy toward fairness.

A common observation in C-fairness and P-Fairness research is the type of attributes used to segment the groups on the consumer and provider side. These attributes could be internal, e.g., determined via user-item interactions~\cite{li2021user,abdollahpouri2019managing,karimi2023provider} or provided externally, e.g., protected attributes such as gender, age, or geographical location~\cite{gomez2022provider,boratto2021interplay}. In this work, we segmented the groups by internal attributes and based on the number of interactions for both consumers and providers.

\partitle{CP-fairness Methods.}
Considering both consumers' and providers' perspectives, \citet{chakraborty2017fair} present mechanisms for CP-fairness in matching platforms such as Airbnb and Uber. \citet{rahmani2022unfairness} studied the interplays and tradeoffs between consumer and producer fairness in Point-of-Interest recommendations. \newtxt{\citet{patro2020fairrec} map fair recommendation problem to the constrained version of the fair allocation problem with indivisible goods and propose an algorithm to recommend top-K items by accounting for producer fairness aspects.} \citet{wu2021tfrom} focus on two-sided fairness from an individual-based perspective, where fairness is defined as the same exposure to all producers and the same \updatednew{normalized Discounted Cumulative Gain (nDCG)} to all consumers involved.

\citet{do2021two} define the notion of fairness in increasing the utility of the worse-off individuals following the concept of distributive justice and proposed an algorithm based on maximizing concave welfare functions using the Frank-Wolfe algorithm. \citet{lin2021mitigating} investigate sentiment bias, the bias where recommendation models provide more relevant recommendations on user/item groups with more positive feedback, and its effect on both consumers and producers.
Numerous research studies have also concentrated on assessing both consumer and producer fairness \cite{deldjoo2021flexible,anelli2023auditing}, as exemplified by \cite{anelli2023auditing} for graph CF models.

The inner layer of the circle in Fig.~\ref{fig:paperstatistics} presents approaches for recommendation fairness that could be classified according to 1) developing metrics to quantify fairness, 2) developing frameworks for producing fair models (according to the desired notion of fairness). Contrary to the works surveyed, we develop a fairness-aware algorithm that simultaneously addresses user groups and item group fairness objectives jointly via an efficient optimization algorithm capable of increasing the system's overall accuracy.
\section{Motivating CP-Fairness Concerns}
\label{sec:motivating_cp}

\begin{table}
    \centering
    \caption{Percentage of users and items located at a different number of interaction thresholds (as $n$ and $r$ represents, respectively) in the training set of the datasets.}
    \begin{adjustbox}{max width=340pt}
    \begin{tabular}{l|llll|llll}
    \toprule
        Dataset & $n\geq10$ & $n\geq20$ & $n\geq50$ & $n\geq100$ & $r\geq10$ & $r\geq20$ & $r\geq50$ & $r\geq100$  \\ \midrule
        MovieLens & 100\% & 82.08\% & 46.98\% & 26.09\% & 77.39\% & 60.49\% & 34.54\% & 16.46\% \\
        Epinion & 99.85\% & 56.60\% & 8.11\% & 1.57\% & 99.81\% & 63.83\% & 17.33\% & 4.61\% \\
        Gowalla & 100\% & 86.11\% & 25.40\% & 4.87\% & 100\% & 84.44\% & 20.10\% & 3.95\% \\
        LastFM & 96.33\% & 72,73\% & 0.00\% & 0.00\% & 73.99\% & 37.36\% & 15.00\% & 4.48\%\\
    \bottomrule
    \end{tabular}
    \label{tbl:dataset_charactristics}
    \end{adjustbox}
\end{table}

In this section, we intend to motivate the need of having two-sided fairness (\ie user and item) by undertaking both data and algorithm analysis. We analyze the distribution and properties of two well-known real-world recommendation datasets, \ie \datasetB and \datasetC, with their details summarized in Table~\ref{tbl:datasets}. We observed similar patterns on the other datasets as shown in Fig.~\ref{fig:CPevalBoxPlotUG1} baseline (N) models.

In Table~\ref{tbl:dataset_charactristics}, we show the distribution of users and items with different numbers of interactions in the datasets. As the values show, most users are concentrated in areas with less interaction with the items. We can also note that the majority of items have received fewer interactions from the users (\ie long-tail items), and a small percentage of items (\ie popular items) are recommended frequently. This motivates the need to expand the long-tail coverage of recommendation lists while encouraging users with fewer interactions to engage with the system.

\begin{figure*}
  \centering
  \subfloat[MovieLens]
    {\includegraphics[scale=0.2]{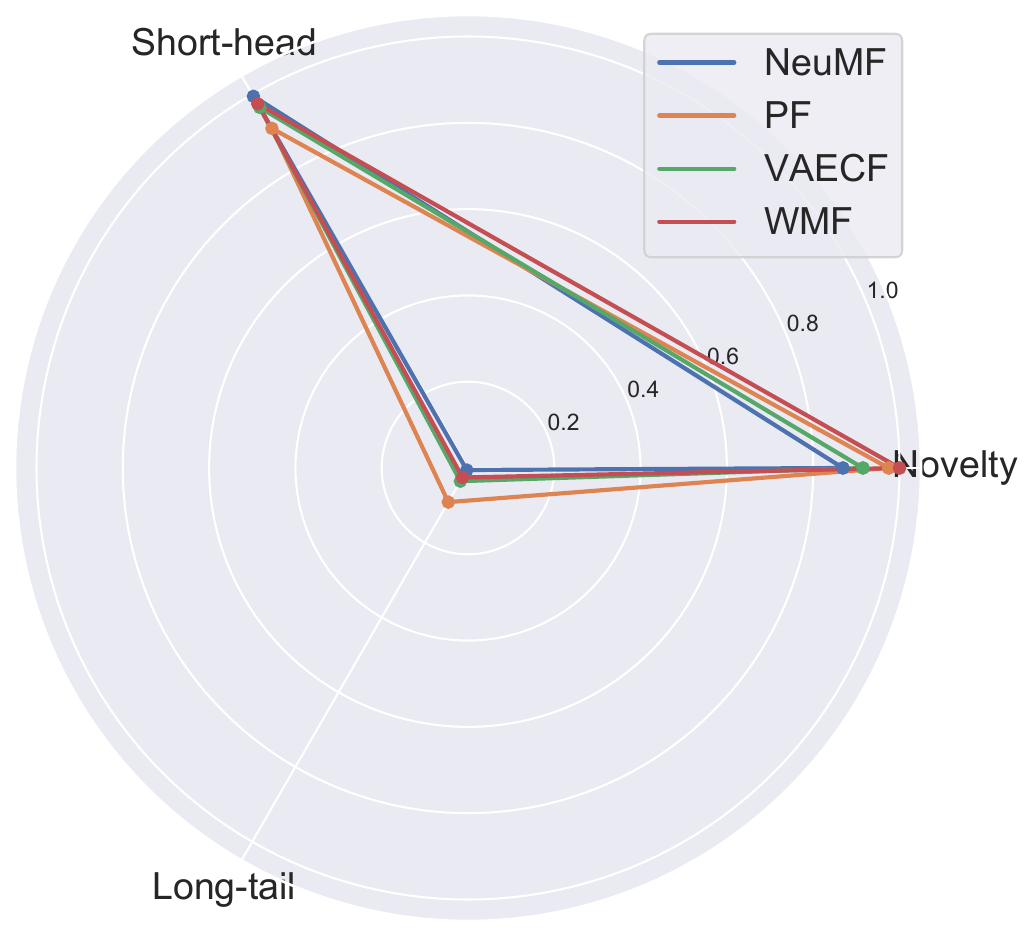}
    \label{fig:radar_item_movielens}}
  \hfill
  \subfloat[Epinion]
    {\includegraphics[scale=0.2]{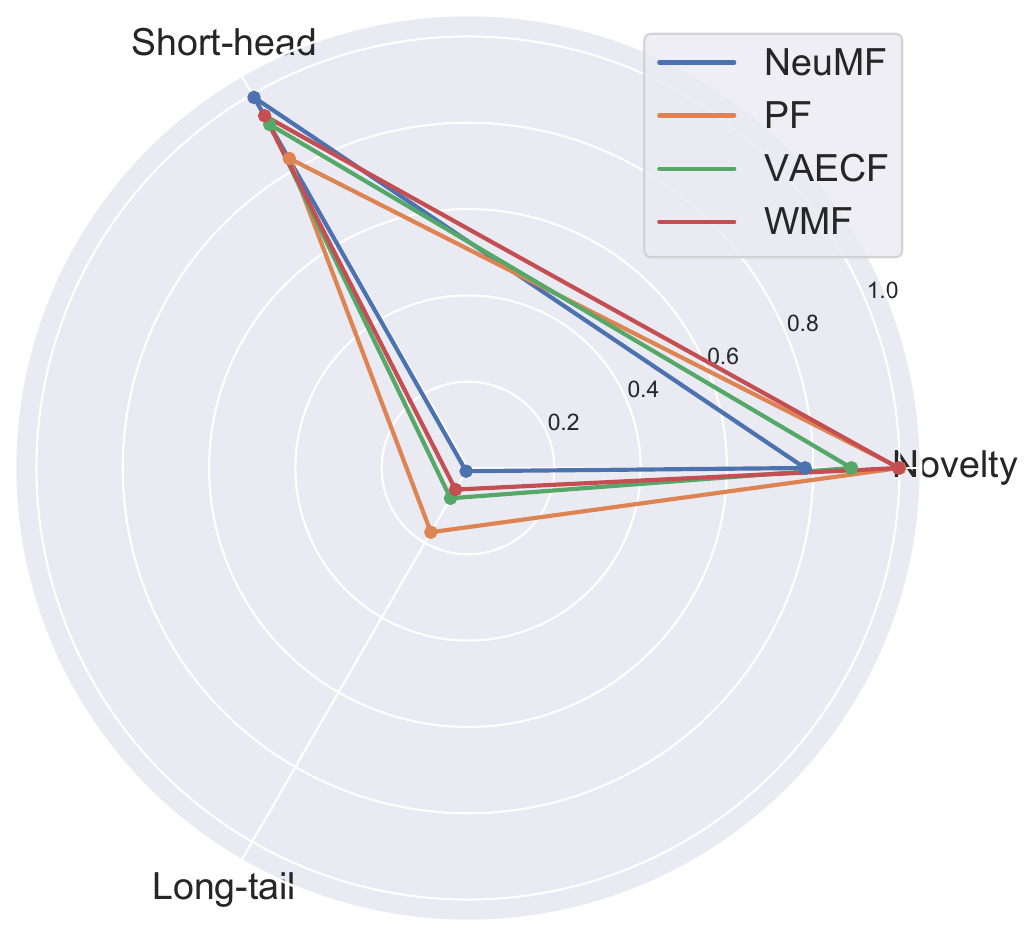}
    \label{fig:radar_items_epinion}}
  \hfill
  \subfloat[LastFM]
    {\includegraphics[scale=0.2]{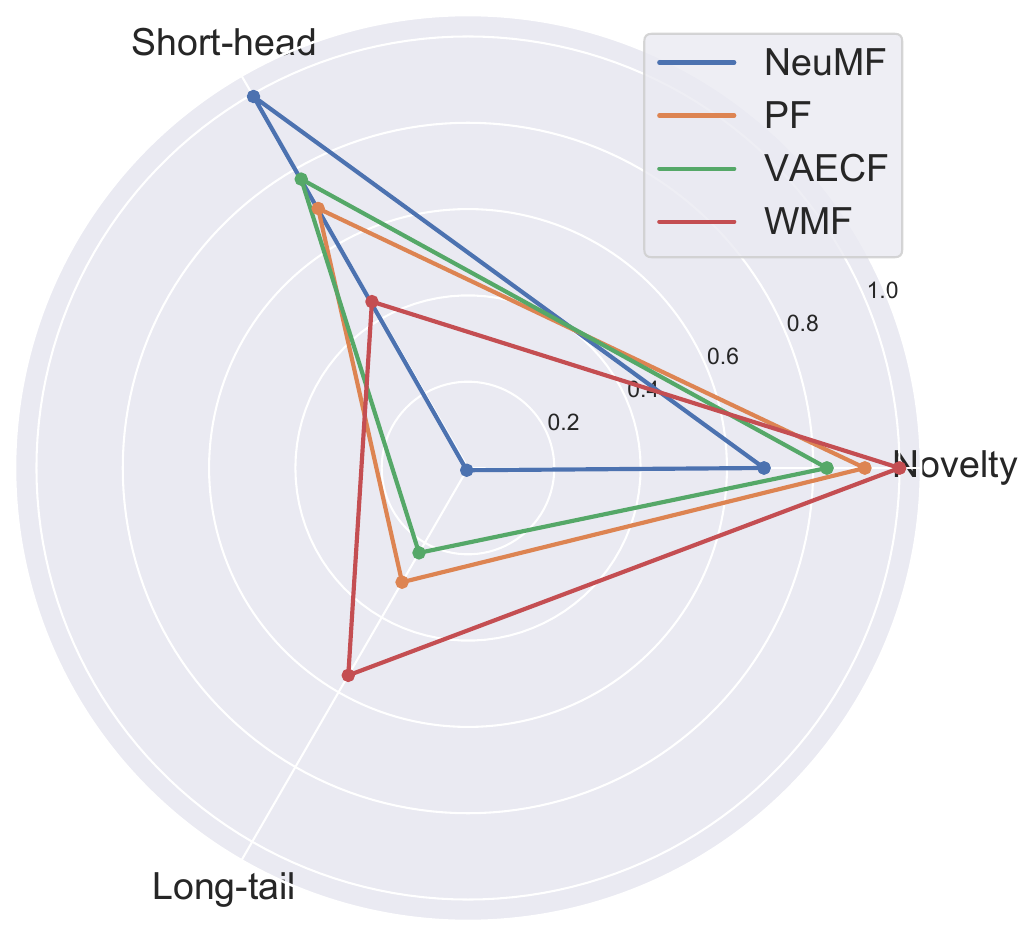}
    \label{fig:radar_item_lastfm}}
  \hfill
  \subfloat[BookCrossing]
    {\includegraphics[scale=0.2]{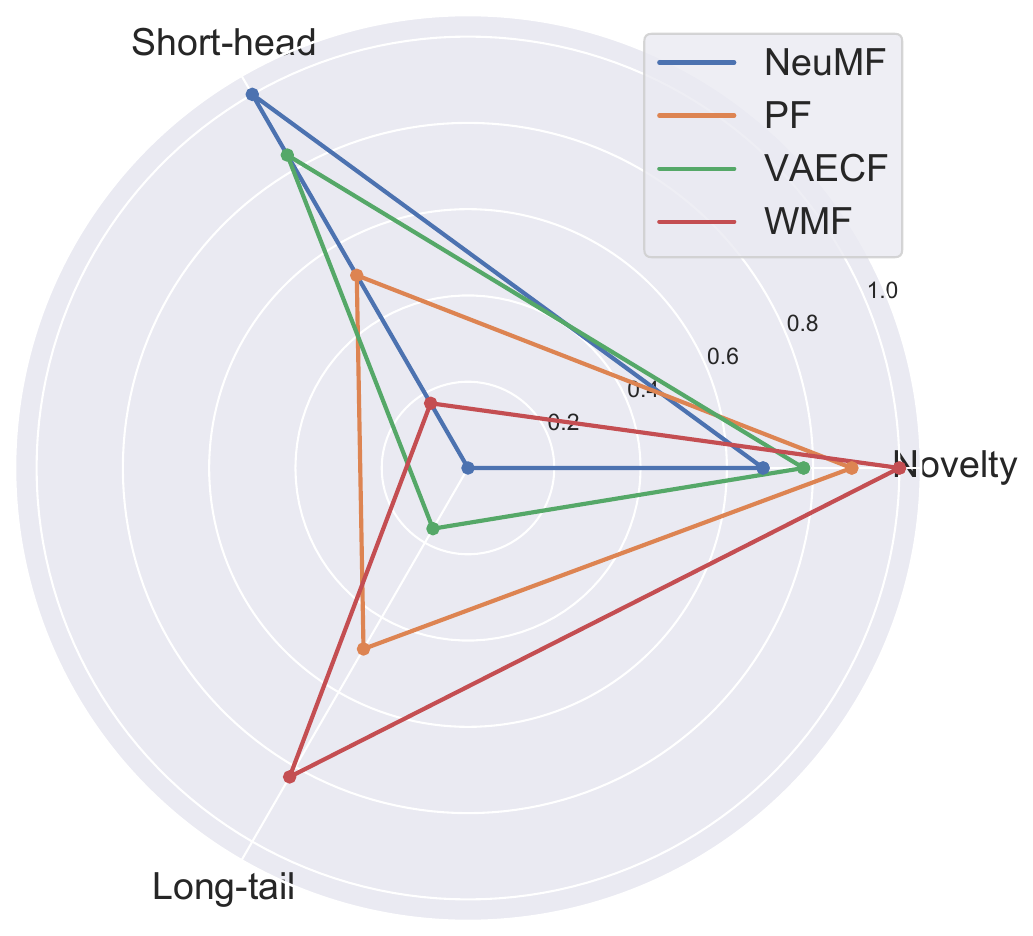}
    \label{fig:radar_item_bookcrossing}}
  \caption{The difference between \itemgroupA and \itemgroupB item groups on item exposure and novelty.}
\label{fig:item_unfairness_rec}
\end{figure*}

\begin{figure*}
    \centering
  \subfloat[MovieLens]
    {\includegraphics[scale=0.2]{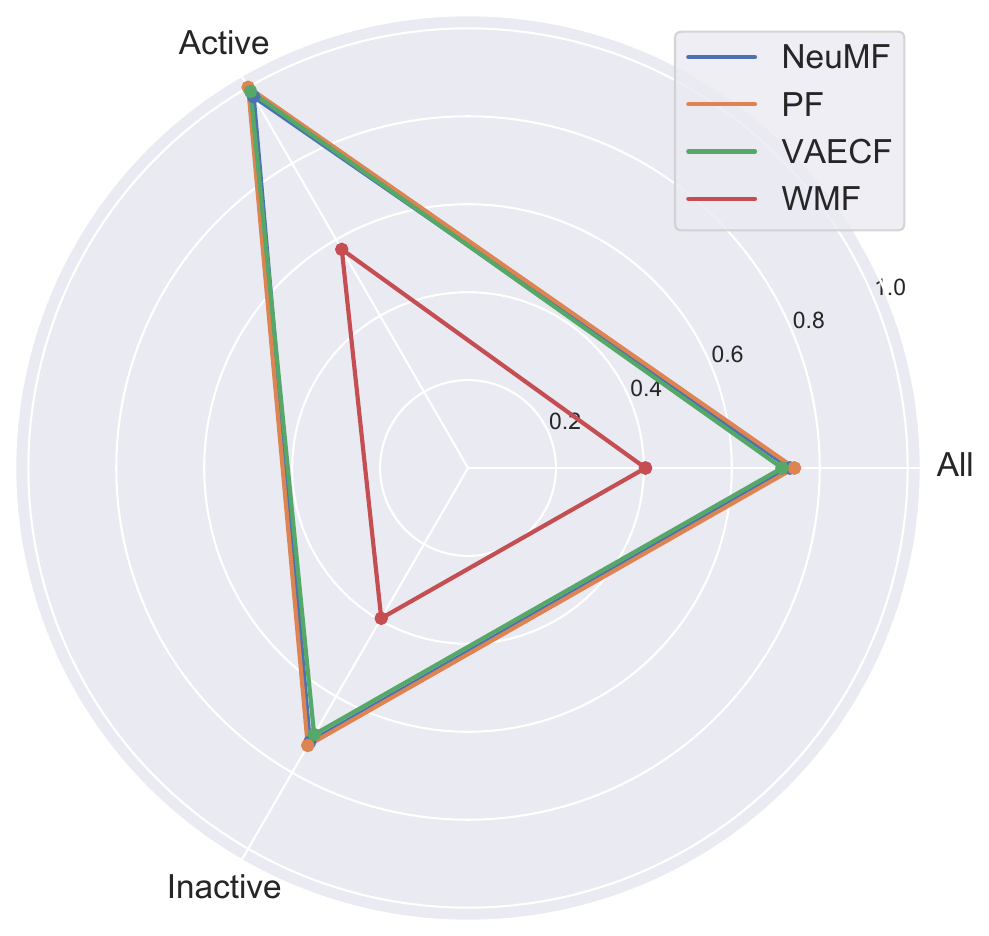}
    \label{fig:radar1_user_movielens}}
  \hfill
  \subfloat[Epinion]
    {\includegraphics[scale=0.2]{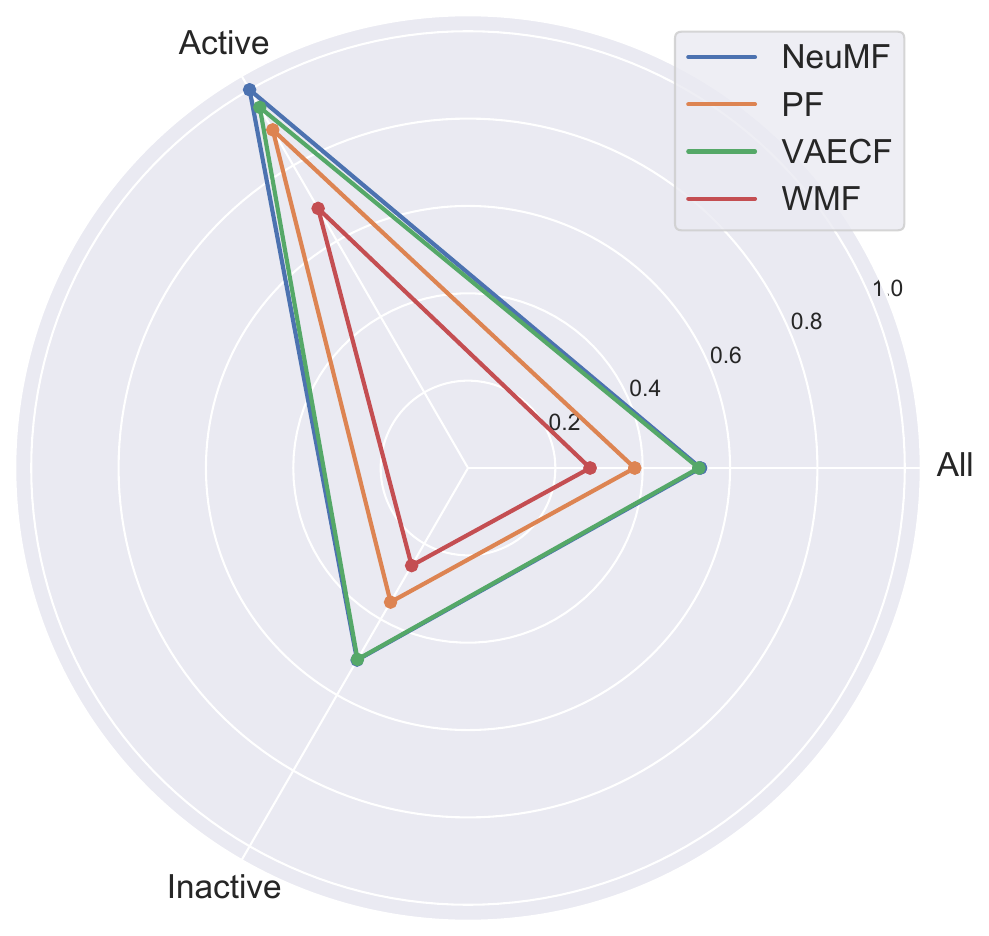}
    \label{fig:radar1_user_epinion}}
  \hfill
  \subfloat[LastFM]
    {\includegraphics[scale=0.2]{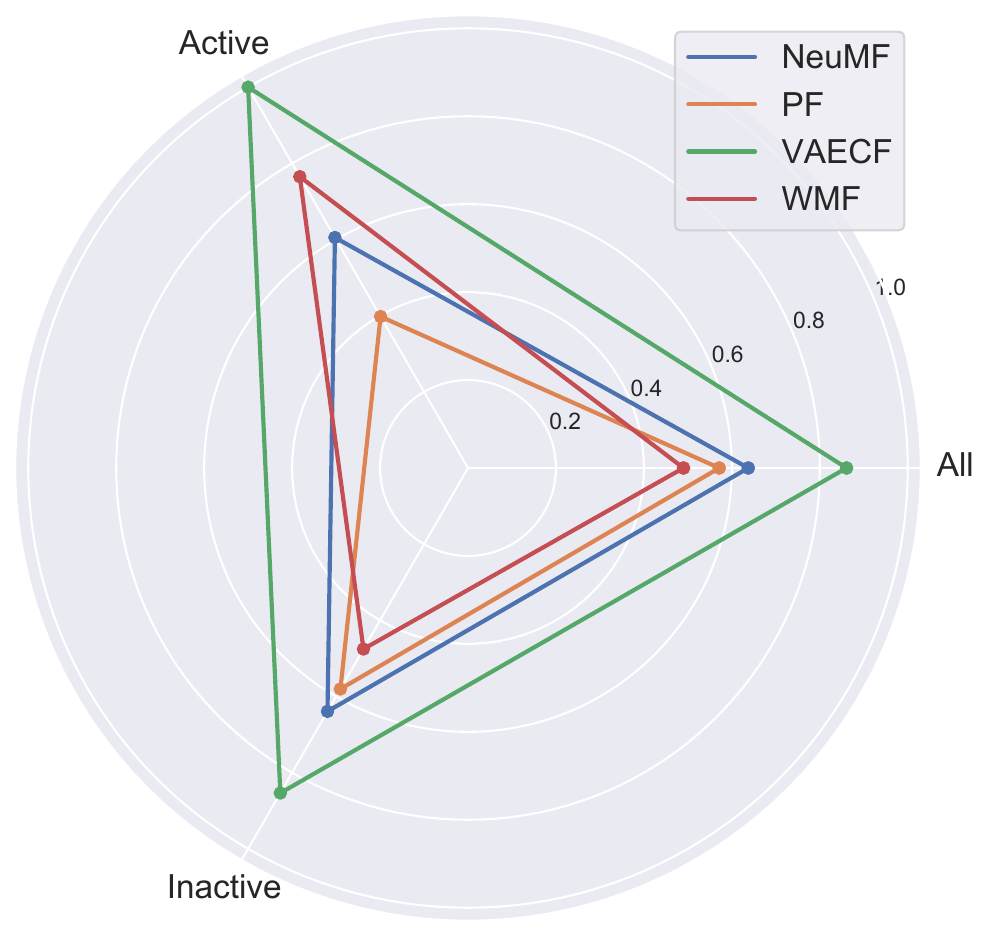}
    \label{fig:radar1_user_lstfm}}
  \hfill
  \subfloat[BookCorssing]
    {\includegraphics[scale=0.2]{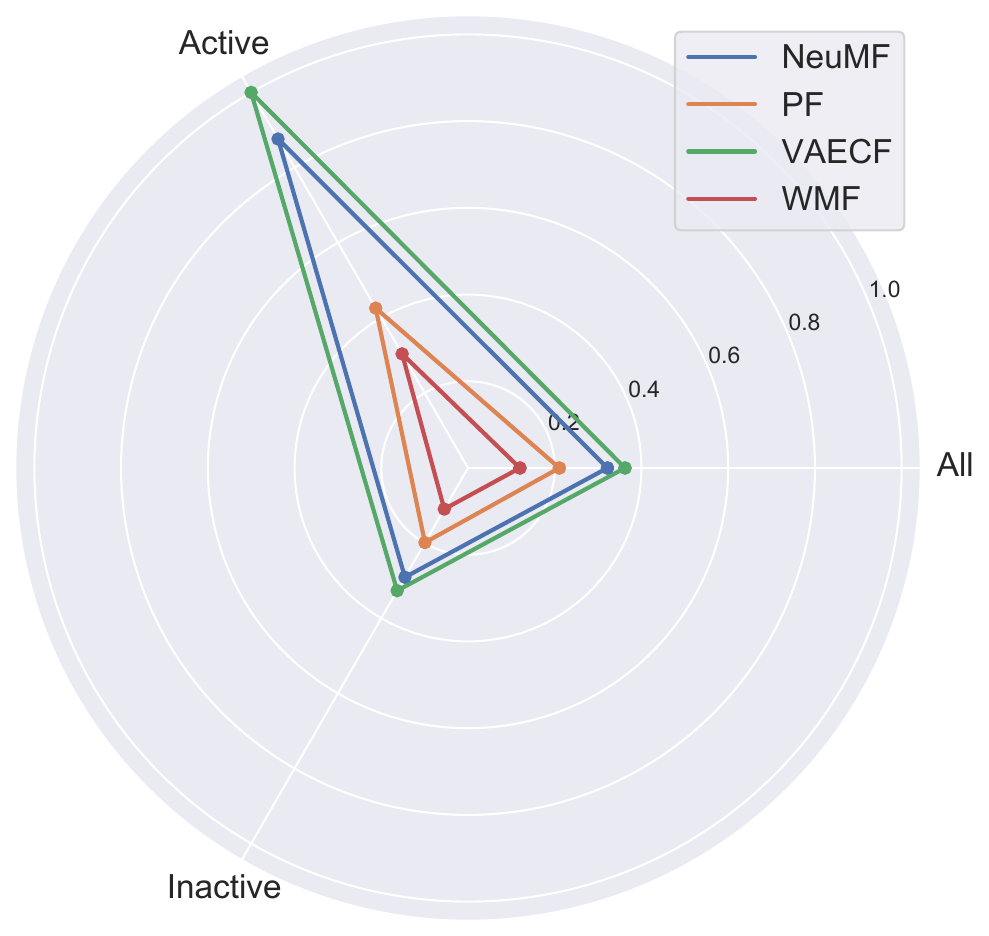}
    \label{fig:radar1_user_bookcrossing}}
    \hfill
\subfloat[MovieLens]
    {\includegraphics[scale=0.2]{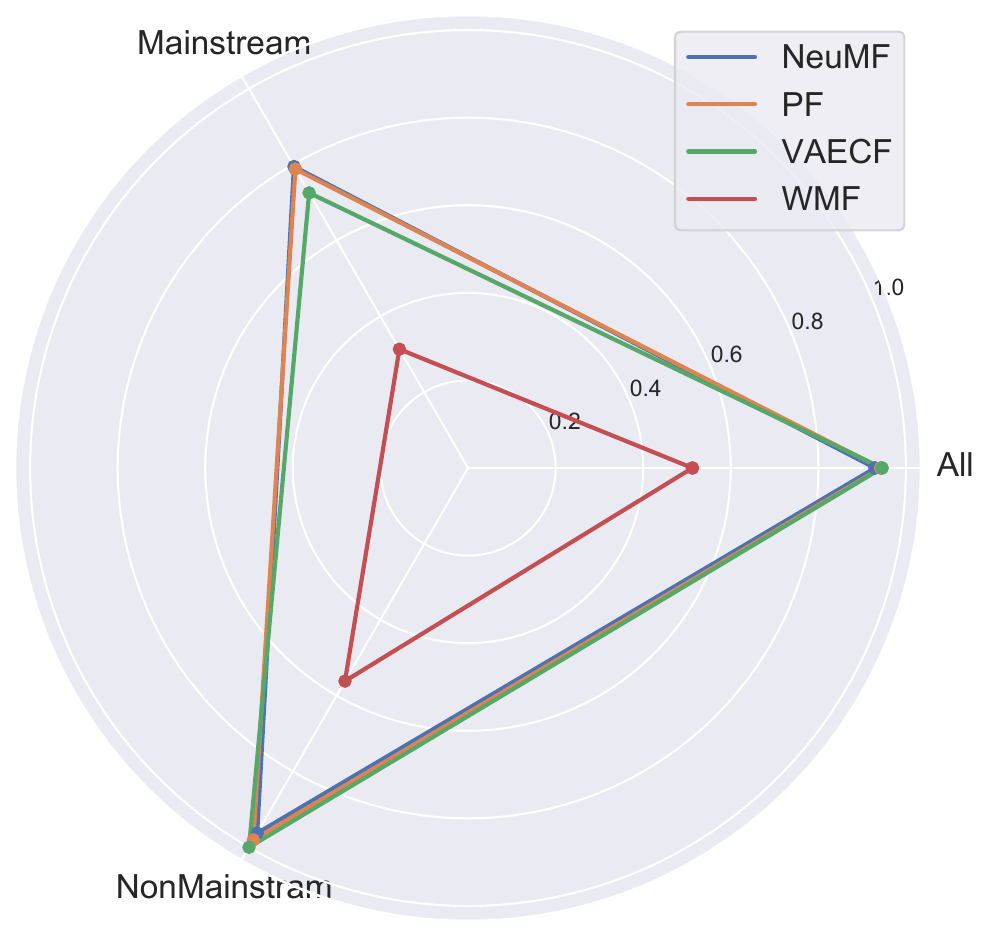}
    \label{fig:radar2_user_movielens}}
  \hfill
  \subfloat[Epinion]
    {\includegraphics[scale=0.2]{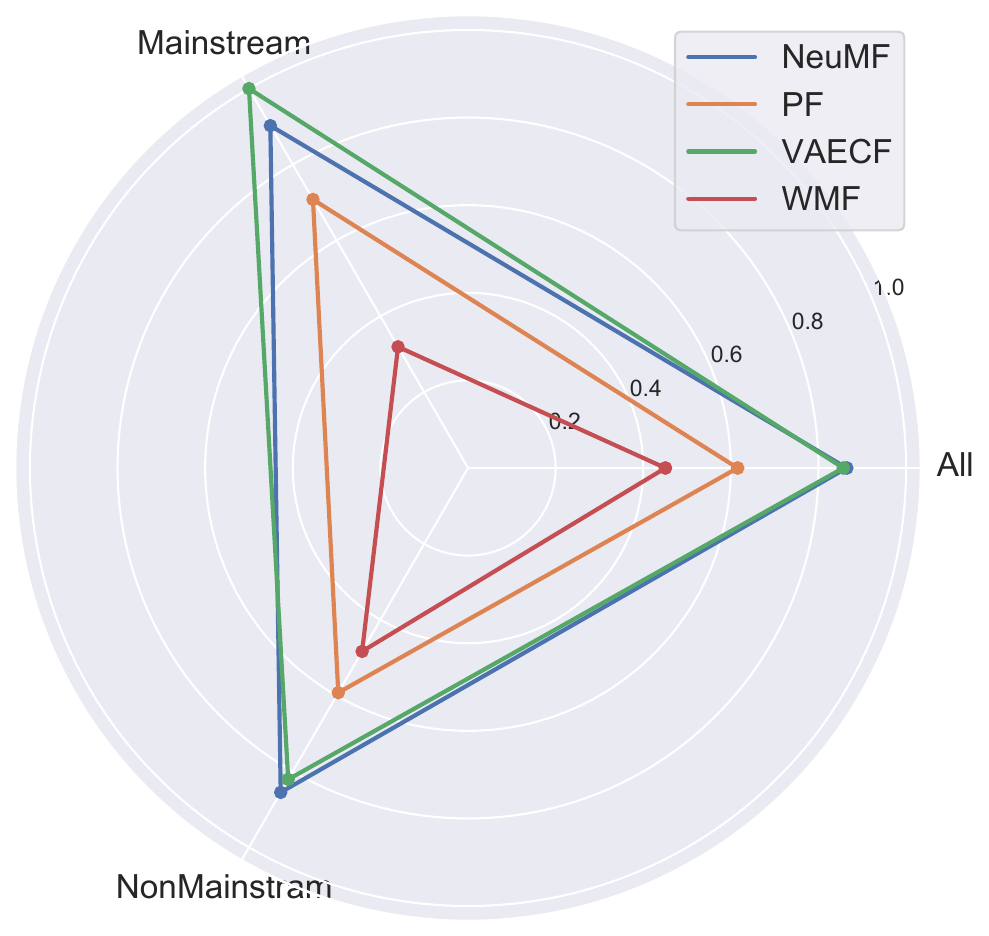}
    \label{fig:radar2_user_epinion}}
  \hfill
  \subfloat[LastFM]
    {\includegraphics[scale=0.2]{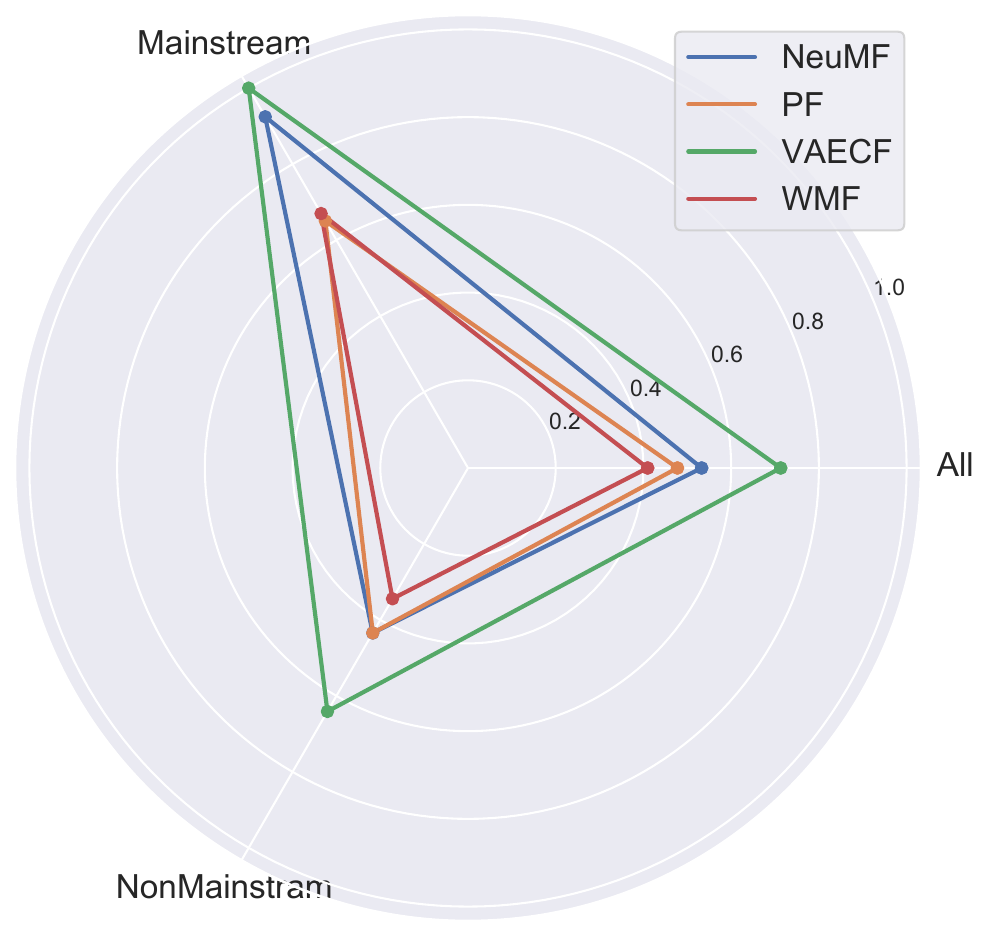}
    \label{fig:radar2_user_lstfm}}
  \hfill
  \subfloat[BookCorssing]
    {\includegraphics[scale=0.2]{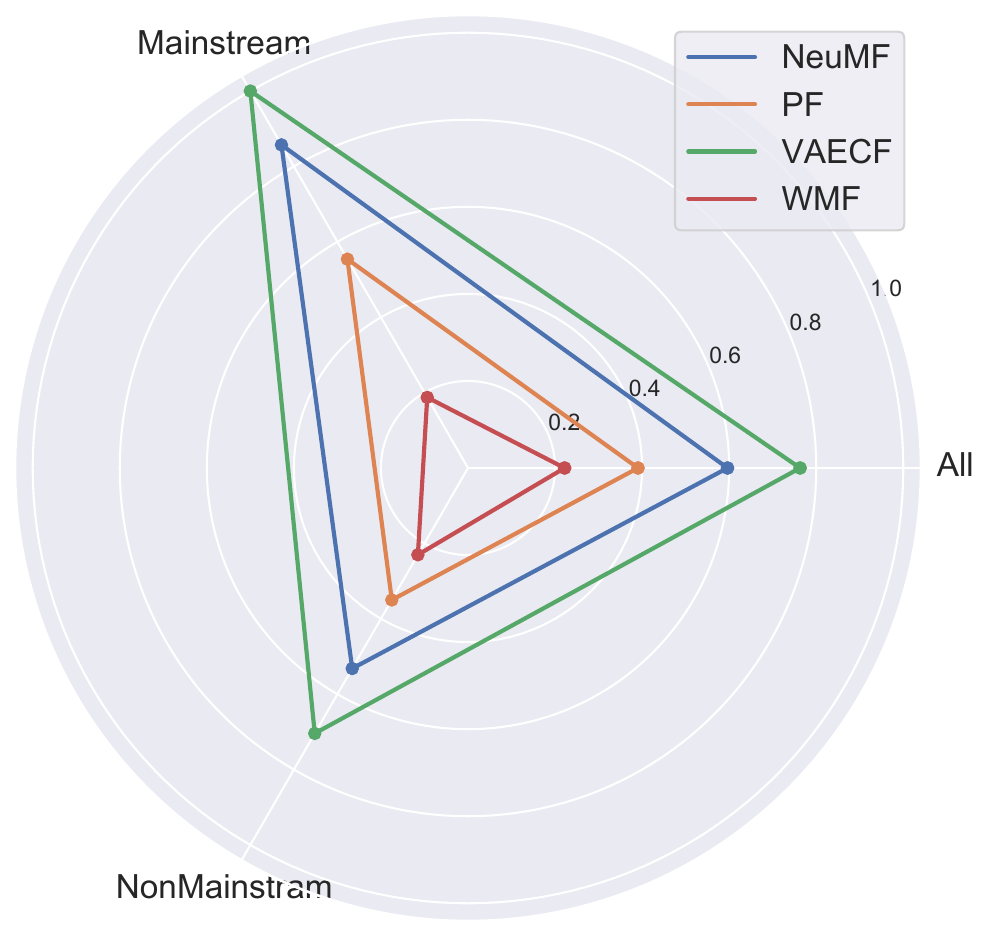}
    \label{fig:radar2_user_bookcrossing}}
  \caption{Figures (a) and (d) show the difference between \usergroupA and \usergroupB user groups and Figures (e) and (d) indicate the difference between mainstream and non-mainstream user groups on normalized nDCG@10.}
\label{fig:user_unfairness_rec}
\end{figure*}

In our next step, we evaluate the recommendation quality of fairness-unaware baseline algorithms on two datasets for both user and item groups. Figs.~\ref{fig:item_unfairness_rec} and \ref{fig:user_unfairness_rec} showcase the min-max normalized values of the recommendation algorithms' quality across the MovieLens, Epinion, LastFM, and BookCrossing datasets. Fig.~\ref{fig:item_unfairness_rec} plots the number of short-head and long-tail items recommended to users, as well as the novelty of the recommendations. Our results show that despite the small number of items in the short-head group compared to the long-tail group, a large proportion of recommended items are popular items, implying a higher degree of unfairness in the recommendation from the provider's perspective (measured in terms of exposure). Moreover, Fig.~\ref{fig:user_unfairness_rec} plots the accuracy (i.e., nDCG@10) for all users, active users, inactive users, mainstream users, and non-mainstream user groups. In Section \ref{sec:results}, we present a detailed analysis of the experiment results in Table \ref{tbl:resultsseg1}. Our findings reveal that the active user group, which represents a small fraction of total users, receives significantly higher recommendation quality than the inactive user group (similarly for mainstream and non-mainstream). This demonstrates that commercial recommendation engines tend to overlook the majority of users. For instance, from Fig.~\ref{fig:radar1_user_epinion}, we observe that NeuMF and VAECF achieve almost the best performance (1.0) in the active user group, while for the inactive user group, the performance is around 0.6. This degradation in overall performance, which is only slightly above the inactive group at 0.6 for both recommendation algorithms, highlights the need for fairness in recommendations. In conclusion, our findings highlight the importance of fairness in recommendations from both the consumers' and providers' perspectives, and thus developing techniques that better serve disadvantaged users and item groups is necessary.
\section{Proposed CP-fairness algorithm}
\label{sec:proposed_method}
In this section, we first define fairness in a multi-sided platform based on deviation from the parity from consumers' and producers' perspectives. Then we propose a mixed-integer linear programming re-ranking algorithm to generate a fair list of recommendations for each user given a traditional unfair recommendation list.

\partitle{Problem Formulation.}
Let $\mathcal{U}$ and $\mathcal{I}$ be the set of consumers and items with size $n$ and $m$, respectively. In a traditional recommender system application, a user, $u \in \mathcal{U}$, is provided a list of top-N items, $L_N(u)$ according to some relevance scores, $s \in S^{n\times N}$. The goal is to compute a new fair ordered list, $L^{F}_K(u)$ from the original unfair baseline recommended items $L_N(u)$ where $L^{F}_K(u) \subset L_N(u) \ \ where \ \ K \leq N$ such that the new list would be more appealing to the system as a whole considering the relevance score to the user $u$ and fairness objectives jointly.

Formally, item $I_x$ is presented ahead of item $I_y$ in $L_N(u)$ if $s_{ux} > s_{uy}$. Recommending the items to users based solely on the relevance score is designed to help improve the overall system recommendation accuracy regardless of the fairness measures in a multi-sided platform. We define a binary decision matrix $A=[A_{ui}]_{n \times N}$ with value 1 when the item ranked $i$ in $L_N(u)$ is recommended to user $u$ in the new fairness-aware recommendation list, i.e., $L^F_K(u)$ and 0 otherwise. Hence, the fair top-K recommendation list to user $u$, can be represented as $[A_{u1},A_{u2},...,A_{uN}]$ when $\sum_{i=1}^{N}{A_{ui}} = K $, $K\leq N$.
We discuss below an  optimization framework to calculate optimal values for matrix $A$ and re-rank original unfair recommendation list.

\subsection{Marketplace Fairness Objectives}
\label{subsec:market_obj}
The fairness representation target, \ie the ideal distribution of exposure or relevance among groups, is the first step to designing a fair recommendation model that should be decided based on the application and by the system designers. \citet{kirnap2021estimation} classified the fairness notions commonly used in IR literature into three groups based on their ideal target representation, (1) parity between groups; (2) proportionality in exposure in groups according to the group/catalog size; and (3) proportionality to the relevance size to the recommended groups. This work focuses on the notion of fairness defined based on a \dquotes{parity-oriented} target representation, leaving other concepts of fairness for future investigations.

\subsubsection{Fairness constraint on producer side} 
On the producer side, fairness may be quantified in terms of two distinct types of benefit functions: exposure and relevance, both of which can be binary or graded \cite{gomez2022provider}. Given that suppliers' primary objective is their products' visibility, this study focuses its attention exclusively on the (binary) exposure dimension, leaving the other producer fairness constraints for future work. To measure the exposure parity, we divide items into two categories of short-head $\mathcal{I}_1$ and long-tail $\mathcal{I}_2$ defined based on their popularity score where $\mathcal{I}_1 \cap \mathcal{I}_2 = \emptyset $. 
\updated{We define the notation of $\mathcal{MP}$ as a metric that quantifies item groups' exposition in the recommendation list decision, $A$. In our experiments, we used the item group exposure count to quantify this.} Deviation from Producer Fairness (DPF) is the metric that defines the deviation from parity exposition defined by:

\begin{equation}
    \mathbf{DPF}(A, \mathcal{I}_1, \mathcal{I}_2) = \E\left[\mathcal{MP}(i,A)| i \in \mathcal{I}_1 \right] - \E\left[\mathcal{MP}(i,A)|i \in \mathcal{I}_2\right] 
    \label{eq:dpf}
\end{equation}

\subsubsection{Fairness constraint on the consumer side}
We divide users into two groups of frequent users, $\mathcal{U}_1$, and less-frequent users, $\mathcal{U}_2$ based on their activity level. \updated{The notation $\mathcal{MC}$ is a metric that can evaluate the recommendation relevance in each group given the recommendation list decision matrix, $A$. In our experiments, we assessed the relevance using group $nDCG$ calculated on the training dataset}. The ideal fair recommendation would recommend items based on the user's known preference regardless of their activity level and the quality of recommendation on frequent and less-frequent users should not get amplified by the recommendation algorithm. We define the deviation from consumer fairness parity as:

\begin{small}
\begin{equation}
\mathbf{DCF}(A, \mathcal{U}_1, \mathcal{U}_2) = \E\left[\mathcal{MC}(u,A) | u \in \mathcal{U}_1 \right] -\E\left[\mathcal{MC}(u,A) | u \in \mathcal{U}_2 \right]
    \label{eq:dcf}
\end{equation}
\end{small}

Note that when the values of $DPF\rightarrow 0$ or $DCF\rightarrow 0$, it shows parity between the advantaged and disadvantaged user and item groups. However, this may suggest a reduction in the overall system accuracy. The sign of $DCF$ and $DPF$ shows the direction of disparity among the two groups, which we decided to keep in this work. \newtxt{It is noteworthy that the proposed metrics $DPF$ and $DCF$, due to their linear nature with respect to the binary decision matrix $A$, present computational efficiency over metrics that necessitate non-linear optimization approaches, thereby simplifying the integration into existing recommendation system frameworks.} \\

\noindent \newtxt{\textbf{Note on Fairness Evaluation Metrics Selection.} Our study employs DCF and DPF metrics that encapsulate consumer effectiveness and producer exposure, aligning with fairness concepts prevalent in recommender systems literature \cite{deldjoo2023fairness}. While these metrics offer a focused lens on recommendation fairness, we acknowledge the broader ecosystem of fairness metrics and prioritize a deeper cross-metric analysis in future work to understand their interdependencies and collective impact on recommendation ecosystems.}

\subsection{Proposed Algorithm}
In this section, we will use the top-N recommendation list and relevance scores provided by traditional unfair recommendation algorithm as baseline. We apply a re-ranking post-hoc algorithm to maximize the total sum of relevance scores while minimizing the deviation from fairness on the producer and consumer perspective, $\mathbf{DCF}$ and $\mathbf{DPF}$ respectively. We can formulate the fairness aware optimization problem given the binary decision matrix, A, as:

\begin{small}
\begin{equation}
\begin{aligned}
\max_{A_{ui}} \quad & \sum_{u \in \mathcal{U}}\sum_{i=1}^{N}{S_{ui}.A_{ui}} - \lambda_1 \times \mathbf{DCF}(A,\mathcal{U}_1,\mathcal{U}_2) - \lambda_2 \times \mathbf{DPF}(A,\mathcal{I}_1,\mathcal{I}_2)\\
\textrm{s.t.} \quad & \sum_{i=1}^{N}{A_{ui}} = K, A_{ui} \in{\{0,1\}}
\end{aligned}
\label{eq:optimization}
\end{equation}
\end{small}

The solution to the above optimization problem would recommend exactly $K$ items to each user while maximizing the summation of preference scores and penalizing deviation from fairness. $ 0 \leq \lambda_1,\lambda_2 \leq 1$ are hyperparameters that would be selected based on the application and would assign how much weight is given to each fairness deviation concern. A complete user-oriented one-sided system could assign high weight to $\mathbf{DCF}$ and 0 to $\mathbf{DPF}$, and vice versa for a provider-oriented system. Note that if the value of $\lambda_1,\lambda_2$ are both equal to $0$ then the optimal solution will be the same as the baseline recommendation system list, that is, $L^{F}_K(u) = L_K(u)$. The effect of different values for $\lambda_1$ and $\lambda_2$ are studied in the ablation study section. Parity between groups is defined as the ideal representation of the fairness in the model but is not enforced as a constraint in our optimization to avoid resulting in diminishing the total model accuracy in a significant way. 

The optimization problem above is a mixed-integer linear programming (MILP) and a known NP-hard problem. Although MILP cannot be generally solved optimally in polynomial time, there exists many fast heuristic algorithms and industrial optimization solvers that would provide a good feasible solution in practice\footnote{\eg Gurobi solver at \url{https://www.gurobi.com}}. The MILP proposed optimization at Equation~\ref{eq:optimization} can be reduced to a special case of Knapsack problem where the goal is to select items for each user that would maximize the total score in which each item has a weight equal to 1 and the total weight is constrained by fixed size K. Since each item has the same weight in this instance of Knapsack problem, we can solve it optimally by a greedy algorithm in polynomial time. Now we are ready to propose the Fair Re-ranking Greedy algorithm (Algorithm~\ref{alg:LBP}) that would give the optimal solution for the optimization problem in Equation~\ref{eq:optimization}. Alternatively, we can relax the binary constraint of variables to fractional variables with boundary constraints \ie $0\leq A_{ui}\leq1$ and solve the relaxed optimization problem in polynomial time. The algorithm based on optimization is presented in Algorithm~\ref{alg:optmization}. \updated{The algorithm based on optimization, Algorithm~\ref{alg:optmization}, is designed to solve the relaxed optimization problem, converting the MILP problem to a standard linear programming problem. In this relaxed problem, we use a simplex algorithm, a standard method for numerically solving such linear programming problems. The simplex algorithm is an iterative method that navigates the corners of the feasible region defined by the problem constraints to find the optimal solution, which maximizes the objective function. This approach allows us to find the best solution that satisfies all the constraints in polynomial time, making it a practical choice for the type of problems we are addressing in this study. However, the solution obtained may be fractional and not necessarily integral, due to the relaxation of the binary constraints.}

\begin{algorithm}
	\DontPrintSemicolon
	\SetAlgoLined
	\SetKwInOut{Input}{Input}\SetKwInOut{Output}{Output}
	\Input{\textit{$\mathcal{U},\mathcal{I}, \lambda_1,\lambda_2, K$}}
	\Output{Item recommendation matrix  {${A}^*$}}
	\tcc{${A}^*$ is a $n\times N$  binary matrix with elements $1$ when the item is recommended to the user and $0$ otherwise.}
	S $\leftarrow$ Run baseline algorithm and store the relevance score matrix for top-N items for each user \;
	\tcc{$S$ is a $n\times N$  matrix representing the predicted relevance score generated by unfair baseline algorithms for each user-item pair}
	Formulate the Optimization Problem as an Integer Programming following equation \ref{eq:optimization} \;
	Solve Relaxed Linear Programming with added boundary constraint $0\leq A_{ui}\leq1$, $\forall u\in \mathcal{U} , \forall i \in \{1,2,...,N \}$ \;
	\Return{$A^*$}
	\caption{The Fair Re-ranking Optimization}
	\label{alg:optmization}
\end{algorithm}

\begin{algorithm}
	\DontPrintSemicolon
	\SetAlgoLined
	\SetKwInOut{Input}{Input}\SetKwInOut{Output}{Output}
	\Input{\textit{$\mathcal{U},\mathcal{I}, \lambda_1,\lambda_2, K$}}
	\Output{Fair top-K recommendation list  {$L^{F}_K$}}
	
	S $\leftarrow$ Run baseline algorithm and store the relevance score matrix for top-N items for each user  \;
	$UG_{u}$ $\leftarrow$ Binary vector with elements of $1$ when user $u \in \mathcal{U}$ is in the protected group and $-1$ otherwise \;
	$PG_{i}$ $\leftarrow$ Binary vector with elements of $1$ when item ranked $i$ in the traditional top-N recommendation list, ${L}_N(u)$, is in the protected group and $-1$ otherwise \;

	\tcc{Protected groups are the underrepresented groups in baseline algorithms such as unpopular items }
	    
	\ForEach{$u\in \mathcal{U}$}{
        \ForEach{$i\in \{1,2,...,N\}$} {
            $CF_{ui} \leftarrow UG_{u} \times \mathcal{MC}(u,i)$ \;
            $PF_{ui} \leftarrow PG_{i} \times \mathcal{MP}(u,i)$ \;
		    $\hat{S}_{ui}\leftarrow S_{ui} + \lambda_1 CF_{ui} + \lambda_2 PF_{ui} $
		    
		    }
		}
   $L^{F}_K(u)$ $\leftarrow$ Sort matrix $\hat{S}$ along the rows axis and select top-$K$ items for recommendation to each users, $\forall$ $u\in \mathcal{U}$ \;
	\Return{$L^{F}_K(u)$}
	\caption{The Fair Re-ranking Greedy Algorithm}
	\label{alg:LBP}
\end{algorithm}
\updated{On the other hand, the proposed Fair Re-ranking Greedy Algorithm aims to re-rank a list of recommendations in a way that ameliorates biases, particularly for underrepresented or protected groups. This is accomplished by modifying the traditional relevance scores with additional terms that account for user and item group memberships. In the Algorithm~\ref{alg:LBP}, for each user $u \in \mathcal{U}$ and item $i \in \{1,2,...,N\}$, the terms $CF_{ui}$ and $PF_{ui}$ are computed. These terms are indicators of whether a user or item belongs to a protected group. They are multiplied by $\lambda_1$ and $\lambda_2$ respectively, which serve as fairness hyperparameters controlling the extent of the fairness adjustments. If $\lambda_1$ and $\lambda_2$ are chosen judiciously, these adjustments can help rectify biases in the original relevance scores.}

\updated{The algorithm then proceeds to re-rank items based on the adjusted scores, $\hat{S}_{ui}$. The top-$K$ items from this re-ranked list are selected for recommendation. This ensures that the final list of recommendations is not solely based on the original relevance scores, but also takes fairness considerations into account. Theoretically, if the original relevance scores exhibit biases against certain user or item groups, our algorithm is designed to rectify those biases, yielding a fairer ranking. This is because it explicitly incorporates fairness adjustments into the scores and re-ranks items based on these adjusted scores. Further empirical studies are planned to corroborate this theoretical analysis and provide concrete evidence of the algorithm's effectiveness on various datasets.}

We finally discuss the time complexity of the Fair Re-ranking Greedy algorithm.

\begin{lemma}
The time complexity of the proposed re-ranking algorithm has a worst-case bound of $\mathcal{O}(n\times N)$
\end{lemma}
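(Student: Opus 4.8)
The plan is to decompose Algorithm~\ref{alg:LBP} into its elementary phases, bound the cost of each phase separately, and then argue that the dominant phase dictates the claimed $\mathcal{O}(n \times N)$ bound. First I would account for the preprocessing that builds the group-membership indicators: constructing the user vector $UG_u$ touches each of the $n$ users once, costing $\mathcal{O}(n)$, and building the positional item vector $PG_i$ touches each of the $N$ ranking positions once, costing $\mathcal{O}(N)$; both are dominated by the main loop. I would treat the invocation of the baseline recommender and the materialization of the score matrix $S$ as a precomputation supplied as input, since the lemma concerns the re-ranking step itself rather than the base model.

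Next I would analyze the nested loop, which is the heart of the algorithm. The outer loop ranges over the $n$ users and the inner loop over the $N$ candidate positions; for each pair $(u,i)$ the algorithm performs a constant number of arithmetic operations to form $CF_{ui}$, $PF_{ui}$, and the adjusted score $\hat{S}_{ui}$, assuming $\mathcal{MC}$ and $\mathcal{MP}$ are read from precomputed tables in $\mathcal{O}(1)$. Hence this phase costs $\sum_{u \in \mathcal{U}} \sum_{i=1}^{N} \mathcal{O}(1) = \mathcal{O}(n \times N)$.

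The crux, and the step I expect to be the main obstacle, is the final selection phase, in which for each user the top-$K$ entries of the adjusted row of $\hat{S}$ must be extracted. A naive full sort of each row costs $\mathcal{O}(N \log N)$ per user, i.e.\ $\mathcal{O}(n \times N \log N)$ overall, which strictly exceeds the claimed bound. To recover $\mathcal{O}(n \times N)$ I would replace the full sort by a linear-time selection routine: a deterministic median-of-medians procedure (or randomized quickselect with expected linear time) identifies the $K$-th largest adjusted score and partitions the row in $\mathcal{O}(N)$ time per user, yielding the top-$K$ set in $\mathcal{O}(n \times N)$ total. If an ordered list is required, imposing order on only the $K$ selected items adds $\mathcal{O}(K \log K)$ per user, which is dominated by $\mathcal{O}(N)$ whenever $K \leq N$, so the bound is preserved.

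Finally I would combine the phases: the preprocessing contributes $\mathcal{O}(n + N)$, the score-adjustment loop contributes $\mathcal{O}(n \times N)$, and the per-user selection contributes $\mathcal{O}(n \times N)$; summing and keeping the dominant term gives the worst-case bound $\mathcal{O}(n \times N)$. The key subtlety I would flag explicitly in the write-up is that the stated bound is only attainable under the selection-based reading of the \emph{sort $\ldots$ and select top-$K$} instruction; making this interpretation explicit ensures that the linear bound is not silently undermined by an implicit $\log N$ factor from an unqualified sort.
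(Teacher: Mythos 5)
Your proposal is correct and follows essentially the same route as the paper: decompose the algorithm into preprocessing, the $\mathcal{O}(n\times N)$ re-scoring loop, and a per-user top-$K$ selection whose cost is kept linear in $N$ so that the middle phase dominates. The only difference is the selection primitive --- you use quickselect/median-of-medians where the paper uses a max-heap achieving $\mathcal{O}(N + K\log K)$ per user --- and your explicit warning that an unqualified full sort would introduce a $\log N$ factor is a point the paper handles the same way, just less explicitly.
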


\begin{proof}
The first 3 lines of the algorithm will store the traditional relevance score matrix for Top-N items and store the protected groups in consumers and providers which can be done in linear time $\mathcal{O}(n + m)$. The second part in line 4-10 is to re-score all elements in the original relevance matrix by accounting for fairness measures which needs $\mathcal{O}(n \times N)$ operations in constant time. Lastly, we will sort the new relevance matrix for all users and select top-K items for recommendation. Using max-heap the worst case time complexity is $\mathcal{O}(N + K \log(K))$. Hence, considering $N$ and $K$ are much smaller than $n$, the total time complexity is $\mathcal{O}(n \times N)$ 
\end{proof}
\section{Experimental Methodology}
\label{sec:experimental_methodology}
In this section, we first briefly describe the datasets, baselines and experimental setup used for experiments. All source code and dataset of this project has been released publicly. To foster the reproducibility of our study, we evaluate all recommendations with the Python-based open-source recommendation toolkit Cornac\footnote{\url{https://cornac.preferred.ai/}} \cite{salah2020cornac,truong2021exploring}, and we have made our codes open source.\footnote{\url{https://github.com/rahmanidashti/CPFairRank}}

\partitle{Datasets.}
We use eight public datasets from different domains such as Movie, Point-of-Interest, Music, Book, E-commerce, and Opinion with different types of feedback (\ie explicit or implicit) and characteristics, including \datasetA, \datasetB, \datasetC, and \datasetD. We apply $k$-core pre-processing (\ie each training user/item has at least $k$ ratings) on the datasets to make sure each user/item has sufficient feedback. Table \ref{tbl:datasets} shows the statistics of all datasets. \newtxt{We remind that the size of the datasets employed in our study was deliberately kept modest to ensure \textbf{computational efficiency}. Despite this, the robustness of our reranking model is evident across various datasets, underlining its adaptability to different data volumes and characteristics.}

\begin{table}
  \caption{Statistics of the datasets}
  \centering
  \label{tbl:datasets}
  \begin{tabular}{lcccc}
    \toprule
    \textbf{Dataset} & Users & Items & Interactions & Sparsity \\
    \midrule
    \textbf{\datasetA}  & 943 & 1,349 & 99,287 & 92.19\% \\
    \textbf{\datasetB}  & 2,677 & 2,060 & 103,567 & 98.12\% \\
    \textbf{\datasetC}  & 1,130 & 1,189 & 66,245 & 95.06\% \\
    \textbf{\datasetD}  & 1,797 & 1,507 & 62,376 & 97.69\% \\
    \textbf{BookCrossing}  & 1,136 & 1,019 & 20,522 & 98.22 \% \\
    \textbf{Foursquare}  & 1,568 & 1,461 & 42,678 & 98.13\% \\
    \textbf{AmazonToy}  & 2,170 & 1,733 & 32,852 & 99.12\% \\
    \textbf{AmazonOffice}  & 2,448 & 1,596 & 36,841 & 99.05\% \\
    \bottomrule 
  \end{tabular}
\end{table}

\partitle{Evaluation Method.}
For the experiments, we randomly split each dataset into train, validation, and test sets in 70\%, 10\%, and 20\% respectively, and all the baseline models share these datasets for training and evaluation. We define Consumer-Producer fairness evaluation (mCPF) metric to evaluate the overall performance of models \wrt two-sided fairness. mCPF computes the weighted average deviation of provider fairness (DPF) and deviation of consumer fairness (DCF), \ie $mCPF(w) = w \cdot DPF + (1-w) \cdot DCF$, where $w$ is a weighting parameter that can be selected by the system designer depending on the application or priority for each fairness aspect of the \textit{marketplace}. DCF and DPF are defined in Equations \ref{eq:dpf} and \ref{eq:dcf}, respectively, and the lowest value for mCPF shows the fairest model. To select the model parameters, $\lambda_1$ and $\lambda_2$, we treat them as a hyperparameter and set them to the value that maximizes the average values of mCPF and overall nDCG for each dataset and model.

\partitle{\updated{Base Ranker Methods.}}
We compare the performance of our fairness method on several recommendation approaches, from traditional to deep recommendation models, as suggested by \citet{dacrema2019we}. Therefore, we include two traditional methods (PF and WMF) as well as two deep recommendation models (NeuMF and VAECF). We also include two baselines approaches, Random and MostPop, for further investigation of the results. The introduction of baselines are as the following:  

\begin{itemize}
    \item \textbf{PF} \cite{gopalan2015scalable}: This method is a variant of probabilistic matrix factorization where the weights of each user and item latent features are positive and modeled using the Poisson distribution.
    \item \textbf{WMF} \cite{hu2008collaborative,pan2008one}: This method assigns smaller wights to negative samples and assumes that for two items their latent features are independent.
    \item \textbf{NeuMF} \cite{he2017neural}: This algorithm learns user and item features using multi-layer perceptron (MLP) on one side and matrix factorization (MF) from another side, then applies non-linear activation functions to train the mapping between users and items features that are concatenated from MLP and MF layers.
    \item \textbf{VAECF} \cite{liang2018variational}: This method is based on variational autoencoders which introduce a generative model with multinomial likelihood and use Bayesian inference for parameter estimation.
\end{itemize}

\partitle{Evaluation Settings.}
We adopt the baseline algorithms with the default parameter settings, suggested in their original paper. We set the embedding size for users and items to $50$ for all baseline algorithms. For NeuMF, we set the size of the MLP with $32, 16, 8$ and we apply hyperbolic tangent (TanH) non-linear activation function between layers. We set the learning rate to $0.001$. Early stopping is applied and the best models are selected based on the performance on the validation set. We apply Adam \cite{kingma2014adam} as the optimization algorithm to update the model parameters.

\subsection{Fairness Group Assumption}

\label{sec:fairness_assumption}
The protected groups usually are treated as the advantaged groups in group fairness \cite{pedreschi2009measuring}. The group of users and items can be divided under different requirements for different tasks \cite{abdollahpouri2019unfairness,li2021user}. In this work, we examine two different user grouping methods and one item grouping method. Specifically, we group users based on the level of activity of users and the consumption of popular items in the user profiles. We also group items according to the ratio of their popularity. Hence, we define users and items groups as follows:

\begin{definition}
    \textbf{The level of item popularity (IG):} 
    Following \cite{abdollahpouri2019unfairness,burke2016towards}, we select the top 20\% of items according to the number of interactions they received from the training set as the popular items (\ie short-head items), and the rest as the unpopular items or long-tail items. We apply this grouping strategy to our study to highlight and evaluate the fairness of the recommendation system in terms of how well it can balance the recommendations between popular and unpopular items. In addition, this metric can also be used to evaluate the effectiveness of different recommendation algorithms in terms of their ability to recommend long-tail items.
\end{definition}

\begin{definition}
    \textbf{The level of user activity (UG1):} In order to investigate potential unfairness in recommendation systems, we adopt a user grouping strategy similar to that used by \citet{li2021user}. This strategy involves separating users into two groups based on their level of activity. The top 5\% of users, ranked by the number of interactions, are labelled as the active group, and the remaining users are labelled as the inactive group. By comparing the performance of recommendation algorithms for these two groups, we aim to evaluate the fairness of the system and any potential bias towards active users.
\end{definition}

\begin{definition}
    \textbf{The consumption of popular items in user profile (UG2)}: To further evaluate the fairness of recommendation systems, we use a user grouping strategy similar to that described by \cite{abdollahpouri2019unfairness}. This strategy involves separating users into two groups based on the number of popular items in their profiles. Popular items are defined as those with popularity values that fall within the top 20\% of item popularity. The top 20\% of users, ranked by the number of popular items in their profiles, are labelled as the mainstream group, and the remaining users are labelled as the non-mainstream group. This approach allows us to investigate any potential bias in the recommendations towards users who have a higher consumption of popular items.
\end{definition}

\begin{table*}
\centering
\caption{The recommendation performance of all, and different user and item groups of our re-ranking method and corresponding baselines on \datasetB and \datasetD datasets. All re-ranking results here are obtained under the fairness constraint on nDCG. The evaluation metrics here are calculated based on the top-10 predictions in the test set. Our best results are highlighted in bold. $\Delta$ values denote the percentage of relative improvement in $mCPF$ compared to fairness-unaware recommendation algorithm (N). The reported results for $mCPF$ are based on $w=0.5$. \updated{Note that the implementation of C is aligned with \cite{li2021user} used as baseline here.}}
\label{tbl:resultsseg1}
\begin{adjustbox}{width=\columnwidth}
\begin{tabular}{llllllllllllllll}
\toprule
\multirow{2}{*}{Model} & \multirow{2}{*}{Type} & \multicolumn{4}{c}{User Relevance (nDCG)} && \multicolumn{5}{c}{Item Exposure} && \multicolumn{3}{c}{Both} \\
\cmidrule{3-6}\cmidrule{8-12}\cmidrule{14-16}
                        &                       & All & Active & Inactive & DCF $\downarrow$ && Nov.  & Cov.  & Short. & Long. & DPF $\downarrow$ && mCPF $\downarrow$ & $\frac{mCPF}{All}$ $\downarrow$ & |$\Delta(\%)$| $\uparrow$\\
\midrule
\multicolumn{16}{c}{\textbf{\datasetB}} \\ \hline
PF & N & 0.0321 & 0.0751 & 0.0298 & 0.0453 && 4.9602 & 0.5073 & 0.8281 & 0.1719 & 0.6562 && 0.3588 & 11.1845 & 0.0 \\ 
PF & C & 0.0337 & 0.0732 & 0.0316 & \textbf{0.0415} && \mycc4.9754 & \mycc0.5083 & \mycc0.8243 & \mycc0.1757 & \mycc0.6486 && 0.3524 & 10.4539 & 1.7837 \\ 
PF & P & \mycc 0.0304 & \mycc 0.0757 & \mycc 0.0280 & \mycc 0.0477 && 5.3172 & 0.5325 & 0.6153 & 0.3847 & \textit{0.2306} && 0.1476 & 4.8553 & 58.8629 \\ 
PF & CP & 0.0313 & 0.0740 & 0.0290 & \textit{0.0450} && 5.3302 & 0.5345 & 0.6113 & 0.3887 & \textbf{0.2226} && 0.1418 & 4.5332 & \textbf{60.4794} \\ \hline
WMF & N & 0.0235 & 0.0577 & 0.0217 & 0.0360 && 4.9483 & 0.2913 & 0.9423 & 0.0577 & 0.8846 && 0.4667 & 19.868 & 0.0 \\ 
WMF & C & 0.0236 & 0.0565 & 0.0219 & \textbf{}\textbf{0.0346} && \mycc4.9481 & \mycc0.2898 & \mycc0.9423 & \mycc0.0577 & \mycc0.8846 && 0.4658 & 19.754 & 0.1928 \\ 
WMF & P & \mycc0.0234 & \mycc0.0577 & \mycc0.0215 & \mycc0.0361 && 4.9626 & 0.3019 & 0.9322 & 0.0678 & \textbf{0.8644} && 0.4567 & 19.5505 & 2.1427 \\ 
WMF & CP & 0.0234 & 0.0565 & 0.0217 & \textit{0.0348} && 4.9623 & 0.3010 & 0.9322 & 0.0678 & \textit{0.8644} && 0.4558 & 19.4454 & \textbf{2.3355} \\ \hline
NeuMF & N & 0.0447 & 0.0840 & 0.0427 & 0.0414 && 3.8733 & 0.1223 & 0.9914 & 0.0086 & 0.9828 && 0.5195 & 11.6089 & 0.0 \\ 
NeuMF & C & 0.0456 & 0.0770 & 0.0440 & \textit{0.0330} && \mycc3.8745 & \mycc0.1223 & \mycc0.9913 & \mycc0.0087 & \mycc0.9826 && 0.5137 & 11.2555 & 1.1165 \\ 
NeuMF & P & \mycc0.0448 & \mycc0.0846 & \mycc0.0427 & \mycc0.0420 && 3.8801 & 0.1262 & 0.9849 & 0.0151 & \textbf{0.9698} && 0.5134 & 11.4675 & 1.1742 \\ 
NeuMF & CP & 0.0455 & 0.0749 & 0.0439 & \textbf{0.0310} && 3.8816 & 0.1262 & 0.9851 & 0.0149 & \textit{0.9702} && 0.5061 & 11.1255 & \textbf{2.5794} \\ \hline
VAECF & N & 0.0444 & 0.0801 & 0.0425 & 0.0376 && 4.4061 & 0.3107 & 0.9190 & 0.0810 & 0.8380 && 0.4445 & 10.009 & 0.0 \\ 
VAECF & C & 0.0443 & 0.0735 & 0.04280 & \textbf{0.0308} && \mycc4.4080 & \mycc0.3102 & \mycc0.9184 & \mycc0.0816 & \mycc0.8368 && 0.4393 & 9.9142 & 1.1699 \\ 
VAECF & P & \mycc0.0438 & \mycc0.0789 & \mycc0.0420 & \mycc0.0369 && 4.4349 & 0.3218 & 0.8946 & 0.1054 & \textit{0.7892} && 0.4196 & 9.5777 & 5.6018 \\ 
VAECF & CP & 0.0439 & 0.0739 & 0.0423 & \textit{0.0316} && 4.4366 & 0.3204 & 0.8943 & 0.1057 & \textbf{0.7886} && 0.4157 & 9.4628 & \textbf{6.4792} \\
\midrule
\multicolumn{16}{c}{\textbf{LastFM}} \\ \hline
PF & N & 0.0372 & 0.0259 & 0.0378 & -0.0119 && 5.0905 & 0.6682 & 0.6945 & 0.3055 & 0.3890 && 0.1885 & 5.0699 & 0.0 \\ 
PF & C & 0.0373 & 0.0260 & 0.0379 & \textit{-0.0120} && \mycc5.0904 & \mycc0.6689 & \mycc0.6947 & \mycc0.3053 & \mycc0.3894 && 0.1887 & 5.0590 & 0.1061 \\ 
PF & P & \mycc0.0369 & \mycc0.0250 & \mycc0.0375 & \mycc-0.0125 && 5.1208 & 0.6729 & 0.6792 & 0.3208 & \textit{0.3584} && 0.1729 & 4.6844 & 8.2759 \\ 
PF & CP & 0.0372 & 0.0251 & 0.0378 & \textbf{-0.0127} && 5.1236 & 0.6729 & 0.6784 & 0.3216 & \textbf{0.3568} && 0.1721 & 4.6263 & \textbf{8.7003} \\ \hline
WMF & N & 0.0319 & 0.0498 & 0.0309 & 0.0188 && 5.5360 & 0.7804 & 0.4452 & 0.5548 & -0.1096 && -0.0454 & -1.4236 & 0.0 \\ 
WMF & C & 0.0334 & 0.0425 & 0.0329 & \textbf{0.0096} && \mycc5.5524 & \mycc0.7717 & \mycc0.4406 & \mycc0.5594 & \mycc-0.1188 && -0.0546 & -1.6352 & 20.2643 \\ 
WMF & P & \mycc0.0290 & \mycc0.0454 & \mycc0.0282 & \mycc0.0172 && 5.7963 & 0.7930 & 0.3313 & 0.6687 & \textit{-0.3374} && -0.1601 & -5.5112 & 252.6432 \\ 
WMF & CP & 0.0299 & 0.0422 & 0.0292 & \textit{0.0130} && 5.8066 & 0.7877 & 0.3278 & 0.6722 & \textbf{-0.3444} && -0.1657 & -5.5455 & \textbf{264.978} \\ \hline
NeuMF & N & 0.0415 & 0.0394 & 0.0416 & -0.0022 && 3.7975 & 0.0916 & 0.9940 & 0.0060 & 0.9880 && 0.4929 & 11.8828 & 0.0 \\ 
NeuMF & C & 0.0444 & 0.0348 & 0.0449 & \textbf{-0.0101} && \mycc3.8015 & \mycc0.0929 & \mycc0.9940 & \mycc0.0060 & \mycc0.9880 && 0.4889 & 11.0113 & 0.8115 \\ 
NeuMF & P & \mycc0.0411 & \mycc0.0391 & \mycc0.0412 & \mycc-0.0021 && 3.8122 & 0.0936 & 0.9811 & 0.0189 & \textit{0.9622} && 0.4801 & 11.6955 & 2.5969 \\
NeuMF & CP & 0.0436 & 0.0303 & 0.0443 & \textit{-0.0140} && 3.8192 & 0.0942 & 0.9773 & 0.0227 & \textbf{0.9546} && 0.4703 & 10.7966 & \textbf{4.5851} \\ \hline
VAECF & N & 0.0560 & 0.0651 & 0.0556 & 0.0095 && 4.6040 & 0.4293 & 0.7730 & 0.2270 & 0.5460 && 0.2777 & 4.9572 & 0.0 \\ 
VAECF & C & 0.0598 & 0.0550 & 0.0600 & \textit{-0.0050} && \mycc4.6036 & \mycc0.4313 & \mycc0.7738 & \mycc0.2262 & \mycc0.5476 && 0.2713 & 4.5375 & 2.3046 \\ 
VAECF & P & \mycc0.0544 & \mycc0.0589 & \mycc0.0542 & \mycc0.0047 && 4.6660 & 0.4340 & 0.7294 & 0.2706 & \textbf{0.4588} && 0.2317 & 4.2561 & 16.5646 \\
VAECF & CP & 0.0561 & 0.0549 & \textit{0.0562} & \textbf{-0.0013} && 4.6635 & 0.4360 & 0.7318 & 0.2682 & \textit{0.4636} && 0.2311 & 4.1172 & \textbf{16.7807} \\ \hline

\bottomrule
\end{tabular}
\end{adjustbox}
\end{table*}

\begin{table*}
\centering
\caption{The recommendation performance of all, and different user and item groups of our re-ranking method and corresponding baselines on \newtxt{\datasetB and \datasetD} datasets. All re-ranking results here are obtained under the fairness constraint on nDCG. The evaluation metrics here are calculated based on the top-10 predictions in the test set. Our best results are highlighted in bold. $\Delta$ values denote the percentage of relative improvement in $mCPF$ compared to fairness-unaware recommendation algorithm (N). The reported results for $mCPF$ are based on $w=0.5$. \updated{Note that the implementation of C is aligned with \cite{li2021user} used as baseline here.}}
\label{tbl:resultsseg2}
\begin{adjustbox}{width=\columnwidth}
\begin{tabular}{llllllllllllllll}
\toprule
\multirow{2}{*}{Model} & \multirow{2}{*}{Type} & \multicolumn{4}{c}{User Relevance (nDCG)} && \multicolumn{5}{c}{Item Exposure} && \multicolumn{3}{c}{Both} \\
\cmidrule{3-6}\cmidrule{8-12}\cmidrule{14-16}
                        &                       & All & \newtxt{Main.} & \newtxt{Non-main.} & DCF $\downarrow$ && Nov. & Cov. & Short. & Long. & DPF $\downarrow$ && mCPF $\downarrow$ & $\frac{mCPF}{All}$ $\downarrow$ & |$\Delta(\%)$| $\uparrow$\\
\midrule
\multicolumn{16}{c}{\textbf{\datasetB}} \\ \hline
PF & N & 0.0321 & 0.0369 & 0.0309 & 0.0060 && 4.9602 & 0.5073 & 0.8281 & 0.1719 & 0.6562 && 0.3399 & 10.5954 & 0.0 \\ 
PF & C & 0.0331 & 0.0338 & 0.0330 & \textbf{0.0008} && \mycc4.9633 & \mycc0.5058 & \mycc0.8265 & \mycc0.1735 & \mycc0.6530 && 0.3332 & 10.0513 & 1.9712 \\ 
PF & P & \mycc0.0304 & \mycc0.0349 & \mycc0.0293 & \mycc0.0056 && 5.3172 & 0.5325 & 0.6153 & 0.3847 & \textit{0.2306} && 0.1229 & 4.0428 & 63.8423 \\ 
PF & CP & 0.0309 & 0.0323 & 0.0306 & \textit{0.0017} && 5.3224 & 0.5325 & 0.6133 & 0.3867 & \textbf{0.2266} && 0.117 & 3.7876 & \textbf{65.5781} \\ \hline
WMF & N & 0.0235 & 0.0167 & 0.0252 & -0.0085 && 4.9484 & 0.2913 & 0.9423 & 0.0577 & 0.8846 && 0.4420 & 18.8165 & 0.0 \\ 
WMF & C & 0.0236 & 0.0163 & 0.0255 & \textbf{-0.0092} && \mycc4.9479 & \mycc0.2908 & \mycc0.9423 & \mycc0.0577 & \mycc0.8846 && 0.4413 & 18.6912 & 0.1584 \\
WMF & P & \mycc0.0234 & \mycc0.0166 & \mycc0.0250 & \mycc-0.0085 && 4.9627 & 0.3019 & 0.9322 & 0.0678 & \textbf{0.8644} && 0.4317 & 18.4803 & 2.3303 \\ 
WMF & CP & 0.0234 & 0.0162 & 0.0252 & \textit{-0.0090} && 4.9619 & 0.3005 & 0.9323 & 0.0677 &\textit{ 0.8646} && 0.4313 & 18.4316 & \textbf{2.4208} \\ \hline
NeuMF & N & 0.0451 & 0.047 & 0.0446 & 0.0025 && 3.8923 & 0.1214 & 0.9911 & 0.0089 & 0.9822 && 0.5025 & 11.1518 & 0.0 \\ 
NeuMF & C & 0.0462 & 0.0462 & 0.0462 & \textbf{0.0000} && \mycc3.8945 & \mycc0.1233 & \mycc0.9911 & \mycc0.0089 & \mycc0.9822 && 0.5000 & 10.8108 & 0.4975 \\ 
NeuMF & P & \mycc0.0451 & \mycc0.047 & \mycc0.0446 & \mycc0.0024 && 3.9049 & 0.1277 & 0.9786 & 0.0214 & \textbf{0.9572} && 0.4896 & 10.8655 & 2.5672 \\ 
NeuMF & CP & 0.0462 & 0.0464 & 0.0462 & \textit{0.0002} && 3.9075 & 0.1272 & 0.9786 & 0.0214 & \textit{0.9572} && 0.4875 & 10.5428 & \textbf{2.9851} \\ \hline
VAECF & N & 0.0444 & 0.0527 & 0.0423 & 0.0103 && 4.3809 & 0.2893 & 0.9229 & 0.0771 & 0.8458 && 0.4407 & 9.9279 & 0.0 \\ 
VAECF & C & 0.0464 & 0.0175 & 0.0536 & \textbf{-0.0362} && \mycc4.4419 & \mycc0.3019 & \mycc0.9153 & \mycc0.0847 & \mycc0.8306 && 0.3873 & 8.3488 & 12.1171 \\ 
VAECF & P & \mycc0.0376 & \mycc0.0461 & \mycc0.0355 & \mycc0.0106 && 4.9044 & 0.318 & 0.5856 & 0.4144 & \textbf{0.1712} && 0.0976 & 2.5944 & 77.8534 \\ 
VAECF & CP & 0.0388 & 0.0186 & 0.0439 & \textit{-0.0253} && 4.9316 & 0.3218 & 0.5856 & 0.4144 & \textit{0.1712} && 0.0623 & 1.6048 & \textbf{85.8634} \\
\midrule
\multicolumn{16}{c}{\textbf{LastFM}} \\ \hline
PF & N & 0.0372 & 0.0507 & 0.0338 & 0.0169 && 5.0905 & 0.6682 & 0.6945 & 0.3055 & 0.3890 && 0.2054 & 5.5245 & 0.0 \\ 
PF & C & 0.0372 & 0.0500 & 0.0340 & \textbf{0.0160} && \mycc5.0883 & \mycc0.6682 & \mycc0.6951 & \mycc0.3049 & \mycc0.3902 && 0.2055 & 5.5227 & 0.0487 \\ 
PF & P & \mycc0.0369 & \mycc0.0509 & \mycc0.0334 & \mycc0.0175 && 5.1208 & 0.6729 & 0.6792 & 0.3208 & 0.3584 && 0.1902 & 5.1531 & 7.4002 \\ 
PF & CP & 0.0370 & 0.0497 & 0.0339 & \textit{0.0159} && 5.1222 & 0.6729 & 0.6784 & 0.3216 & \textbf{0.3568} && 0.1886 & 5.0932 & \textbf{8.1792} \\ \hline
WMF & N & 0.0319 & 0.0522 & 0.0268 & 0.0254 && 5.5358 & 0.7804 & 0.4452 & 0.5548 & \textit{-0.1096} && -0.0427 & -1.339 & 0.0 \\ 
WMF & C & 0.0328 & 0.0502 & 0.0284 & \textbf{0.0218} && \mycc5.5379 & \mycc0.7711 & \mycc0.4443 & \mycc0.5557 & \mycc-0.1114 && -0.0454 & -1.3854 & 6.3232 \\ 
WMF & P & \mycc0.0290 & \mycc0.0443 & \mycc0.0252 & \mycc0.0190 && 5.7961 & 0.7930 & 0.3313 & 0.6687 & \textbf{-0.3374} && -0.1612 & -5.5491 & 277.5176 \\ 
WMF & CP & 0.0295 & 0.0435 & 0.0260 & \textit{0.0175} && 5.7963 & 0.7863 & 0.3306 & 0.6694 & \textit{-0.3388} && -0.1626 & -5.5137 & \textbf{280.7963} \\ \hline
NeuMF & N & 0.0415 & 0.0720 & 0.0338 & 0.0382 && 3.7975 & 0.0916 & 0.9940 & 0.0060 & 0.9880 && 0.5192 & 12.5169 & 0.0 \\ 
NeuMF & C & 0.0408 & 0.0622 & 0.0355 & \textbf{0.0267} && \mycc3.7906 & \mycc0.0942 & \mycc0.9938 & \mycc0.0062 & \mycc0.9876 && 0.5132 & 12.5784 & 1.1556 \\ 
NeuMF & P & \mycc0.0411 & \mycc0.0698 & \mycc0.0339 & \mycc0.0359 && 3.8122 & 0.0936 & 0.9811 & 0.0189 & \textbf{0.9622} && 0.5050 & 12.3021 & 2.735 \\ 
NeuMF & CP & 0.0400 & 0.058 & 0.0355 & \textit{0.0225} && 3.8052 & 0.0949 & 0.9809 & 0.0191 & \textit{0.9618} && 0.4981 & 12.4556 & \textbf{4.0639} \\ \hline
VAECF & N & 0.0555 & 0.0779 & 0.0499 & 0.0280 && 4.6164 & 0.4280 & 0.7790 & 0.2210 & 0.5580 && 0.2965 & 5.3414 & 0.0 \\ 
VAECF & C & 0.0538 & 0.0002 & 0.0672 & \textbf{-0.067} && \mycc4.5467 & \mycc0.4333 & \mycc0.7821 & \mycc0.2179 & \mycc0.5642 && 0.2518 & 4.6794 & 15.0759 \\ 
VAECF & P & \mycc0.0509 & \mycc0.0748 & \mycc0.0449 & \mycc0.0298 && 5.1853 & 0.4605 & 0.3776 & 0.6224 & \textbf{-0.2448} && -0.1089 & -2.1391 & 136.7285 \\ 
VAECF & CP & 0.0470 & 0.0106 & 0.0561 & \textit{-0.0454} && 5.1287 & 0.4612 & 0.3776 & 0.6224 & \textit{-0.2448} && -0.1467 & -3.1233 & \textbf{149.4772} \\ 
\bottomrule
\end{tabular}
\end{adjustbox}
\end{table*}

The reason for exploring these two methods is that we believe the difference between user interactions and popular consumption will reflect their different activity from both quantitative and qualitative positives and thus in a more reasonable manner.
\section{Results and Discussion}
\label{sec:results}

\subsection{Statements of examined research questions}
To gain a better understanding of the merits of the proposed CP fairness optimization framework, through the course of experiments, we intend to answer the following evaluation questions:

\begin{description}
    \item \textbf{RQ1}: In this research question, we wish to investigate whether there is a trade-off between the three evaluation objectives <Acc, DCF, DPF> in the \underline{baseline} CF models (i.e., the rows denoted with \squotes{N} indicating potentially unfair models), where trade-off may be a result of underlying data biases and may require \textit{algorithmic intervention}. \\
    
    \item \textbf{RQ2}: In the second research question, we seek to determine whether the popular \underline{uni-sided} fairness focused on C-Fairness or P-Fairness intervention is sufficient to reduce unfairness on both the consumer- and producer-sides, or whether biases persist in one side.  This will motivate the potential necessity of \textit{two-sided fairness intervention}.  \\

    \item \textbf{RQ3}. This research question investigates whether or not the proposed \underline{CP-fairness} algorithm based on MIP in this work satisfies the requirements for fairness on \underline{both} the consumer and supplier sides concurrently, and whether or not this compromises the \underline{overall accuracy} of the system.

 \end{description}

We will now proceed to the next section to describe the answers to these research questions in further detail.

\begin{table}[h]
\caption{The correlation coefficients between evaluation metrics <Acc, DCF> and <Acc, DPF> across all
eight datasets were explored in this research (for baseline models in the N category). \label{tbl:correlation}}
\begin{tabular}{c|c|c|c}
\toprule
 & model/obj. & \multicolumn{1}{c|}{<Acc, DCF>} & \multicolumn{1}{c}{<Acc, DPF>} \\ \midrule
\multirow{2}{*}{\begin{tabular}[c]{@{}c@{}}Activity-lev.\\ (Table 3)\end{tabular}} & VACEF & 0.0366 & -0.0511 \\
 & NeuMF & 0.7494 & 0.0356 \\ \midrule
\multirow{2}{*}{\begin{tabular}[c]{@{}c@{}}Mainstream\\ (Table 4)\end{tabular}} & VACEF & -0.1664 & -0.3679 \\ 
 & NeuMF & 0.4497 & -0.0836 \\
 \bottomrule
\end{tabular}
\end{table}

\subsection{Answers to the RQs and Discussion}

The result of applying our proposed \texttt{CP-FairRank} optimization algorithm in detail are presented in Table~\ref{tbl:resultsseg1} (for activity-level user segmentation) and Table~\ref{tbl:resultsseg2} (for mainstream-based user segmentation -- main.~vs.~non-main.), where both tables employ the same popularity-based segmentation on the item side. In both experimental settings, neural models \textbf{NeuMF} and \textbf{VAEFCF} tend to achieve the highest levels of accuracy. As a result, we focus on these models when describing fairness performance, as it would be pointless to analyze fairness for models with relatively poor performance. \\

\vspace{-3mm}

\noindent \textbf{Answers to RQ1 and RQ2.} \dquotes{\textit{Bias inherent in the data of unfair models and the effect of applying one-sided fairness optimization on evaluation objectives \textbf{<Acc, DCF, DPF>}}}.

\begin{itemize}
    \item First, we show some individual examples from Table~\ref{tbl:resultsseg1} and Table~\ref{tbl:resultsseg2}. According to Table~\ref{tbl:resultsseg2}, one can note the results of the best models along $<Acc, DCF, DPF>$ for the \textbf{Epinion} dataset: NeuMF: $<0.0451, 0.0025, 0.9822>$ and VAECF: $<0.0444, 0.0103, 0.8458>$, while for \textbf{LastFM} dataset, we have NeuMF: $<0.0415, 0.00382, 0.988>$ and VAECF: $<0.0555, 0.028, 0.558>$. One immediate observation is that in at least these two datasets, these models alter DCF and DPF inconsistently to obtain high performance on accuracy. As an illustration, in \textbf{Epinion}, VAECF achieves good performance with less compromise on DPF but a higher compromise on DPF, and this trend is \textit{reversed} in \textbf{LastFM}. These findings imply that preexisting biases in the data may be exaggerated differently by the baseline CF model, necessitating algorithmic intervention.

    \item Since Table~\ref{tbl:resultsseg1}, and Table~\ref{tbl:resultsseg2}, only represent a small subset of the total experiments conducted, in Table \ref{tbl:correlation} we compute the Pearson correlation coefficient across all eight datasets examined in this study better to comprehend the relationship between the various evaluation objectives. Ideally, a negative correlation between $<Acc\uparrow, DPF\downarrow>$ and $<Acc\uparrow, DCF\downarrow>$ is desirable, as we need maximal accuracy and minimal unfairness on the producer and consumer side. The most intriguing finding is for the VACEF method, which demonstrates a negative correlation between $<Acc, DCF>$, where this value is subtly better (i.e., lower) than its NeuMF equivalent, Table \ref{tbl:resultsseg1} (-0.0511 vs.~-0.356) vs. Table \ref{tbl:resultsseg2} (0.356 vs.~-0.0836). Quite intriguingly, we observe a strong positive correlation between and, as opposed to a weak or negative correlation in VACEF. These results indicate that \textit{VACEF is a strategy that employs distinct mechanisms for retrieving relevant items}, is capable of recommending relevant by proposing items from the long tail (thereby enhancing DPF), and better serves the under-served user group (improving DCF).
    \item Now, we will focus on RQ2 and investigate the impact of one-sided fairness optimization on several evaluation criteria. Table~\ref{tbl:resultsseg1}  and Table~\ref{tbl:resultsseg2}  reveal that the general trend is that C-optimization increases C-Fairness while P-optimization improves P-Fairness. For instance, from Table \ref{tbl:resultsseg1}, we can see that NeuMF C-optimization could achieve almost 20\% improvement compared to the baseline (\ie N) on Epinion. Moreover, WMP P-Optimisation (see Table \ref{tbl:resultsseg2}) improved the baseline models by 207\%. To account for variation among datasets, we calculated the coefficients $\frac{DCF}{ACC}$ and $\frac{DPF}{ACC}$ for the two best models in Table \ref{tbl:tradeoff}. We can observe that the general trend is that C-Fairness optimization improves DCF, while P-Fairness optimization improves DPF and this improvement appears to be greater for VACEF. Another intriguing observation is that uni-sided fairness optimization may not guarantee multi-sided fairness improvement, e.g., P-optimization may have a negative impact on  $\frac{DCF}{ACC}$, NeuMF (1.091 to 1.094) and $\frac{DCF}{ACC}$, VAECF (0.255 to 0.290).
\end{itemize}

\noindent These results are quite insightful and demonstrate that our suggested strategy is capable of addressing diverse stakeholder fairness concerns. They also expose a gap in past research, namely, the failure to investigate \textit{multi-sided fairness evaluation} when considering single-sided fairness intervention. Thus, the results in this part set the ground for simultaneously demonstrating the performance of our proposed \texttt{CP-FairRank} on all stakeholder evaluation objectives. \\

\begin{table}[h]
\caption{We calculated the $\frac{DPF}{ACC}$ and $\frac{DCF}{ACC}$ for each of the 8 datasets utilized in this study, and calculated their average below, in order to demonstrate the interplay and probable trade-off between fairness and overall system accuracy. \label{tbl:tradeoff}}
\begin{tabular}{c|c|ccc|ccc}
\toprule
 & \multirow{2}{*}{model/obj.} & \multicolumn{3}{c|}{$\frac{DPF}{ACC}\downarrow$} & \multicolumn{3}{c}{$\frac{DCF}{ACC}\downarrow$} \\
\cline{3-8}
 &  & \multicolumn{1}{c}{N} & \multicolumn{1}{c}{P} & C & \multicolumn{1}{c}{N} & \multicolumn{1}{c}{C} & P \\ \hline
\multirow{2}{*}{\begin{tabular}[c]{@{}c@{}}Activity-lev.\\ (Table 3)\end{tabular}} & VACEF & \multicolumn{1}{c}{21.238} & \multicolumn{1}{c}{18.786} & 21.006 & \multicolumn{1}{c}{1.051} & \multicolumn{1}{c}{0.942} & 1.002 \\
 & NeuMF & \multicolumn{1}{c}{50.130} & \multicolumn{1}{c}{49.891} & 49.805 & \multicolumn{1}{c}{1.091} & \multicolumn{1}{c}{0.782} & 1.094 \\ \midrule
\multirow{2}{*}{\begin{tabular}[c]{@{}c@{}}Mainstream\\ (Table 4)\end{tabular}} & VACEF & \multicolumn{1}{c}{20.869} & \multicolumn{1}{c}{0.976} & 21.499 & \multicolumn{1}{c}{0.255} & \multicolumn{1}{c}{-0.802} & 0.290 \\
 & NeuMF & \multicolumn{1}{c}{46.498} & \multicolumn{1}{c}{46.232} & 46.201 & \multicolumn{1}{c}{0.314} & \multicolumn{1}{c}{0.182} & 0.308 \\
 \bottomrule
\end{tabular}
\end{table}

\noindent \textbf{Answers to RQ3.} \dquotes{\textit{Examining the effectiveness of \texttt{CP-FairRank}, our proposed CP-Fairness optimization method for multi-stakeholder fairness optimization}}.

The response to this research question is the center of this study. Here are some insights into the effectiveness of \texttt{CP-FairRank}, our CPFairness optimization method:

\begin{itemize}
    \item Upon first inspection of Tables \ref{tbl:resultsseg1} and \ref{tbl:resultsseg2}, it is evident that the suggested CP-Fairness optimization approach is able to simultaneously improve $DCF$ and $DPF$ of the investigated CF models without sacrificing a great deal of accuracy, and this holds for both datasets and disadvantaged groups. Due to their simplicity, we employ the statistics $mCPF$ and $\frac{mCPF}{Acc}$ to show this effect (the lower these value are the better they are judged). For instance, we compare here these two statistics for \squotes{N} and \squotes{CP} for a randomly selected dataset \textbf{Epinion} and VACEF method, which is equal to VAECF-N = <0.4445, 10.0009> and VAECF-CP = <0.4157, 9.4628>, whereas for the second user group VAECF-N = <0.4407, 9.9279> and VAECF-CP = <0.0623, 1.6048>. These results demonstrate a clear improvement in joint fairness without affecting system utility as a whole, and this improvement is significantly greater for the second user grouping.
     \item We propose to aggregate the results across all datasets using the plots in Fig.~\ref{fig:1stUG}, where the x-axis represents the overall system utility (accuracy) and the y-axis shows one of fairness objectives (DPF or DCF). In this case, the plots are the baseline CF model without any fairness intervention (\squotes{N}) and the models after making \squotes{CP} intervention using our proposed \texttt{CP-FairRank}. Note that the points in these graphs reflect one of the 8 datasets examined in this study.\footnote{Consider that the orange plots (after fairness intervention) should be below the blue plots (before intervention) since the y-axis represents the fairness targets and the lower they are, the lower the unfairness.} This tendency can be easily confirmed by examining the results across all models, demonstrating that our strategy can increase fairness consistently across models and datasets. In certain circumstances, for instance, NeuMF, the improvement appears to be greater (looking at the sloping curve downward), indicating that, depending on how much each model produces biased outcome, there may be more or less space for improvement.
\end{itemize}

\vspace{3mm}

\colorbox{gray!30}{
\begin{minipage}{0.94\textwidth}
\noindent \textbf{Summary.}
To provide a fair recommendation to both disadvantaged consumer and producer groups, we propose \texttt{CP-FairRank}, a post-hoc method for optimizing the fairness of ranking lists provided to users by taking into account the interests of both stakeholders involved in recommendation, consumers and producers. Through large-scale experiments on 8 datasets broadly adopted used in recommendation research, we believe our study provides persuasive evidence that the proposed technique can increase mCPF (joint consumer and producer fairness) in literally all experimental case without compromising much on accuracy (in some case the accuracy is also improved). Notable improvement is also shown in the user grouping based on \textit{main-streamness}, indicating that fairness optimization is not a purely mathematical process and that its quality can be affected by the data distribution provided by the underlying user and item groups.
\end{minipage}}

\vspace{3mm}

\begin{figure*}
  \centering
  \subfloat[NeuMF <Acc, DCF>]
    {\includegraphics[scale=0.25]{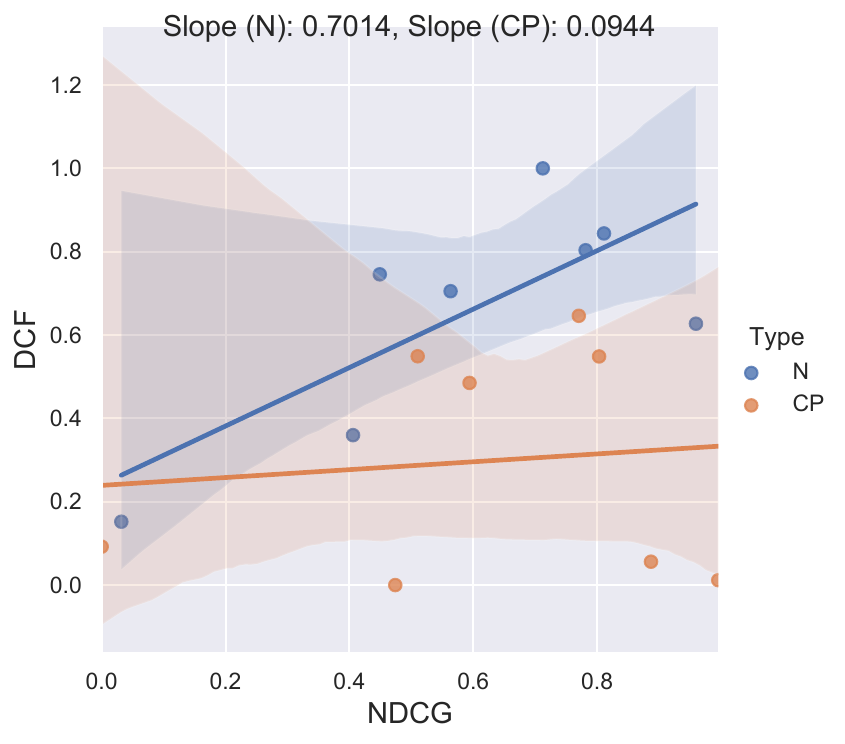}
    }
  \hfill
  \subfloat[NeuMF <Acc, DPF>]
    {\includegraphics[scale=0.25]{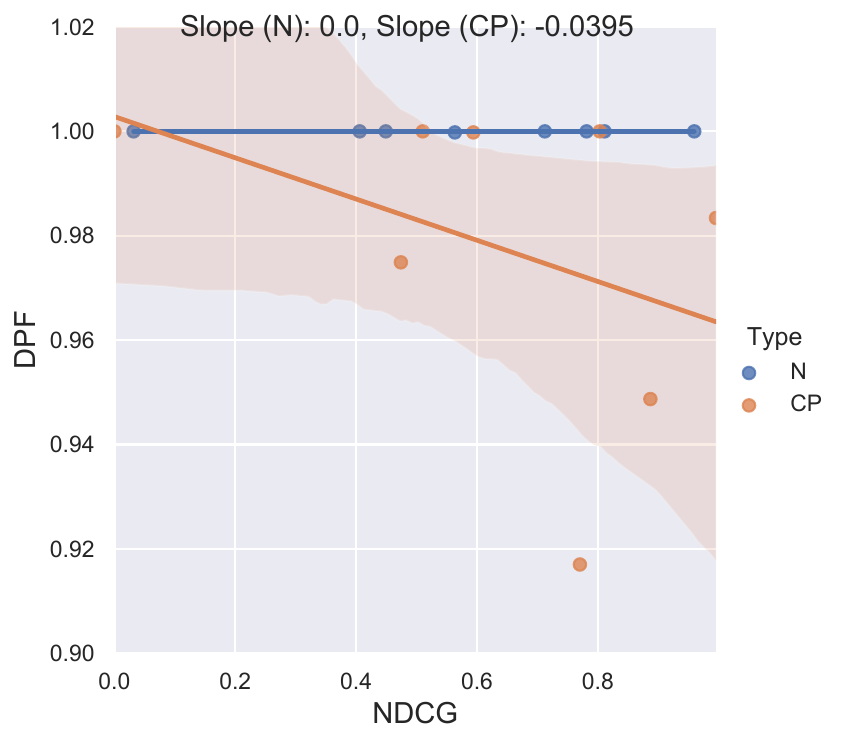}
    }
    \hfill
  \subfloat[VAECF <Acc, DCF>]
    {\includegraphics[scale=0.25]{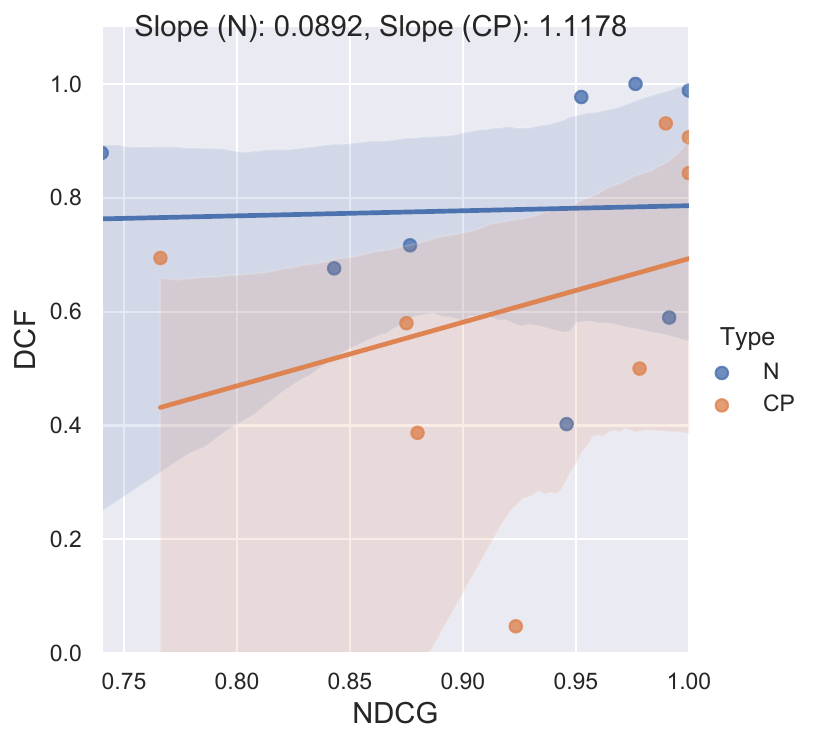}
    }
   \hfill
  \subfloat[VAECF <Acc, DPF>]
    {\includegraphics[scale=0.25]{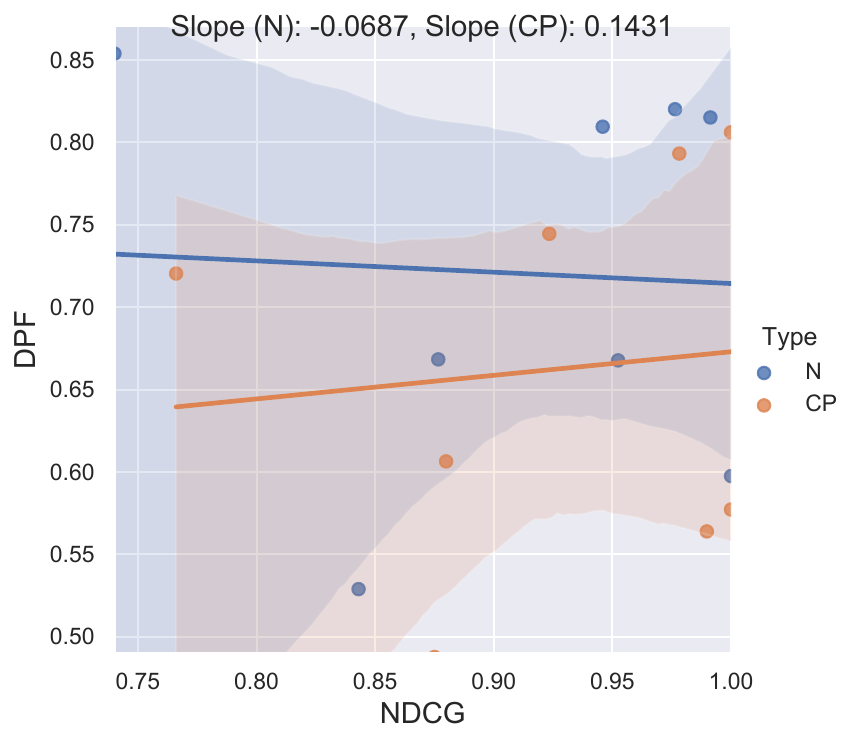}
    }
      \caption{Correlation plots were produced for the \textit{best performing} models (in terms of accuracy), NeuMF, and VACEF, to illustrate the potential trade-off between accuracy and fairness metrics. Noting that the points in this example represent datasets, the correlation measures the generalizability of the findings across domains.}
      \label{fig:1stUG}
\end{figure*}

\partitle{Ablation Study: Distributions of mCPF.}
In order to thoroughly assess the effectiveness of our proposed CP-fairness re-ranking algorithm, we conducted a comprehensive experiment with 128 cases on eight real-world datasets. The experiment was conducted on two user groups, UG1 and UG2, to examine the ability of the algorithm to meet the requirements of both user and item fairness in a two-sided marketplace. The results of the experiment are presented in Figures \ref{fig:CPevalBoxPlotUG1} and \ref{fig:CPevalBoxPlotUG2}. These figures show the distribution of the mCPF performance for all datasets, using four baseline algorithms (PF, WMF, NeuMF, and VAECF) on activity-level and mainstream user groups, respectively. The results indicate that our proposed CP-fairness model outperforms the baseline algorithms and unilateral fairness models in all datasets, with a lower mCPF. It is also important to note that the behavior of the unilateral fairness models varies across datasets. This variability is due to differences in the underlying data characteristics. For example, in Fig.~\ref{fig:CPevalBoxPlot1_MovieLens}, the C-fairness method performs better than the P-fairness method, while in Fig.~\ref{fig:CPevalBoxPlot1_Fourquare} on the Foursquare dataset, the P-fairness approach outperforms the C-fairness approach. In conclusion, the results of our experiment demonstrate that our CP re-ranking algorithm not only reduces fairness disparities between the two groups of users and items, but also provides competitive accuracy in comparison to the unilateral fairness methods and baselines. This is reflected in the nDCG@10 scores reported on top of the plots in Figures \ref{fig:CPevalBoxPlotUG1} and \ref{fig:CPevalBoxPlotUG2}. Our proposed algorithm is therefore a promising solution for enhancing fairness in two-sided marketplaces.

\partitle{Ablation Study: Effect of $\lambda_1$ and $\lambda_2$.}
\label{sec:ablation}
We also analyze the impact of the optimization regularization parameters $\lambda_1$ and $\lambda_2$ from Equation \ref{eq:optimization}. We expect that the bigger $\lambda_1$ and $\lambda_2$ are, the fairer our proposed model will be. However, the excessive pursuit of fairness is unnecessary and could have an adverse impact on the overall recommendation performance. Therefore, we are interested in studying how different values of $\lambda_1$ and $\lambda_2$ in Equation \ref{eq:optimization} can influence the total performance of the system, \ie total accuracy, and novelty of the recommendation, as well as their impact on consumer and provider groups.

\begin{figure*}
  \centering
  \subfloat[MovieLens]
    {\includegraphics[scale=0.24]{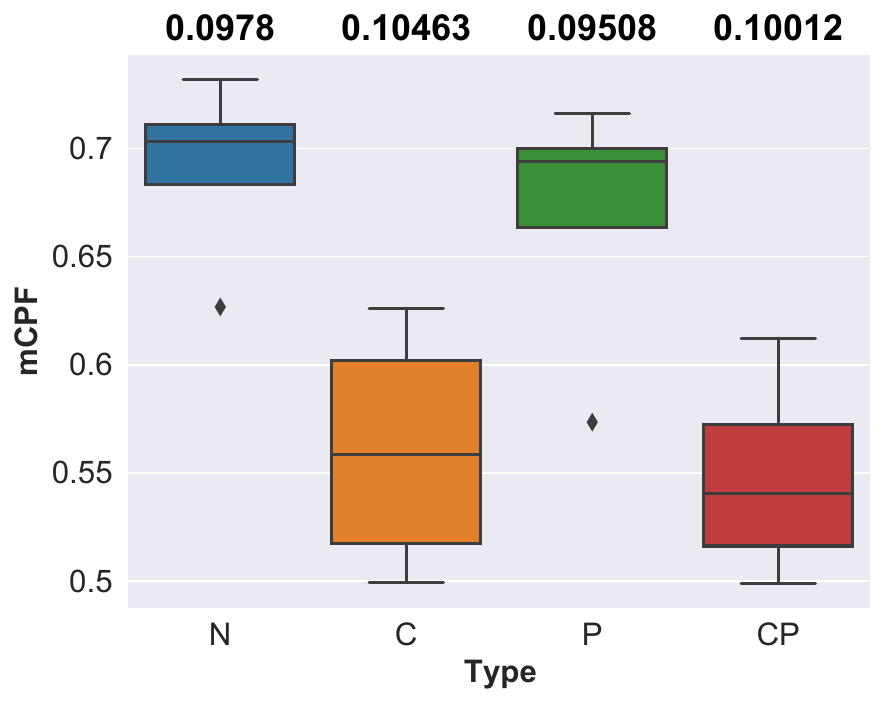}
    \label{fig:CPevalBoxPlot1_MovieLens}}
  \hfill
  \subfloat[Epinion]
    {\includegraphics[scale=0.24]{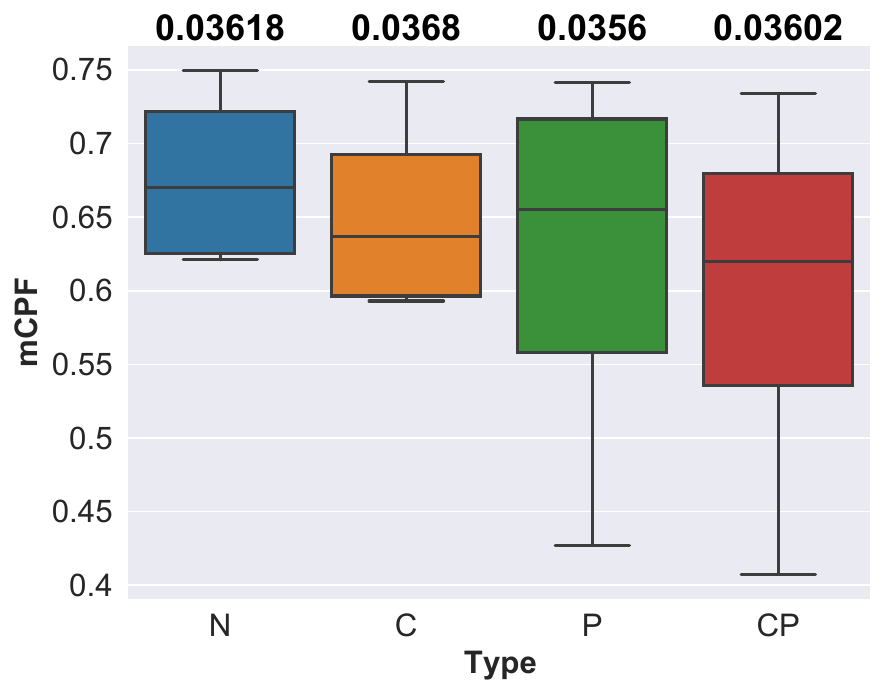}
    \label{fig:CPevalBoxPlot1_Epinion}}
  \hfill
  \subfloat[AmazonToy]
    {\includegraphics[scale=0.24]{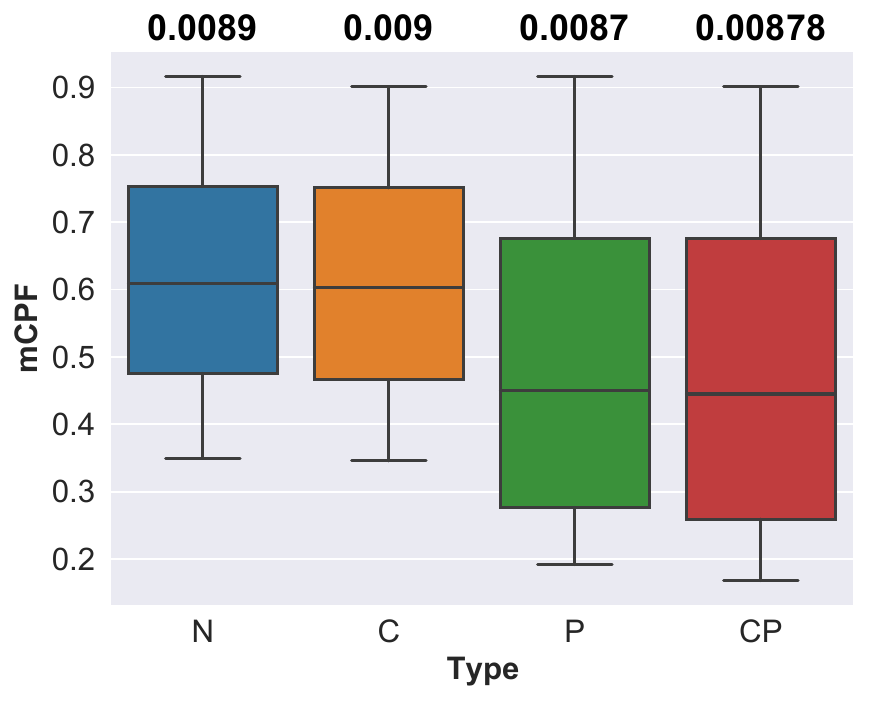}
    \label{fig:CPevalBoxPlot1_AmazonToy}}
    \hfill
  \subfloat[AmazonOffice]
    {\includegraphics[scale=0.24]{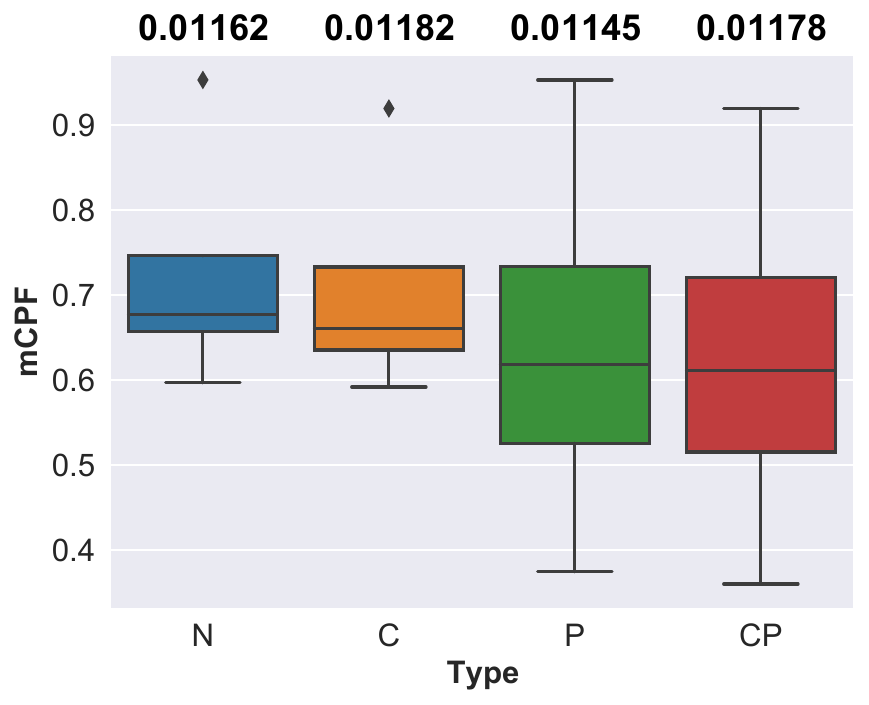}
    \label{fig:CPevalBoxPlot1_AmazonOffice}}
   \hfill
  \subfloat[BookCrossing]
    {\includegraphics[scale=0.24]{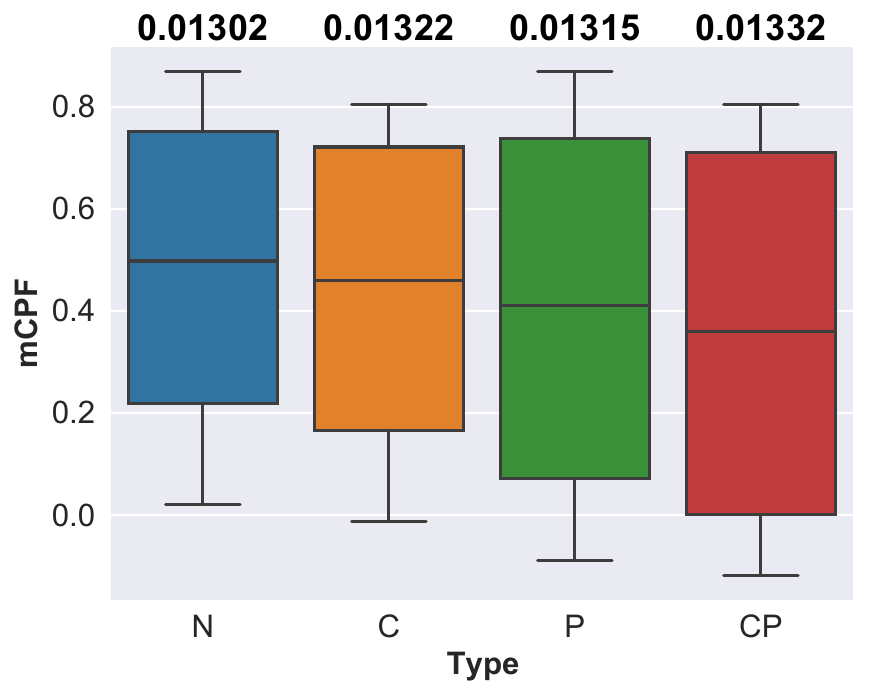}
    \label{fig:CPevalBoxPlot1_BookCrossing}}
  \hfill
  \subfloat[Gowalla]
    {\includegraphics[scale=0.24]{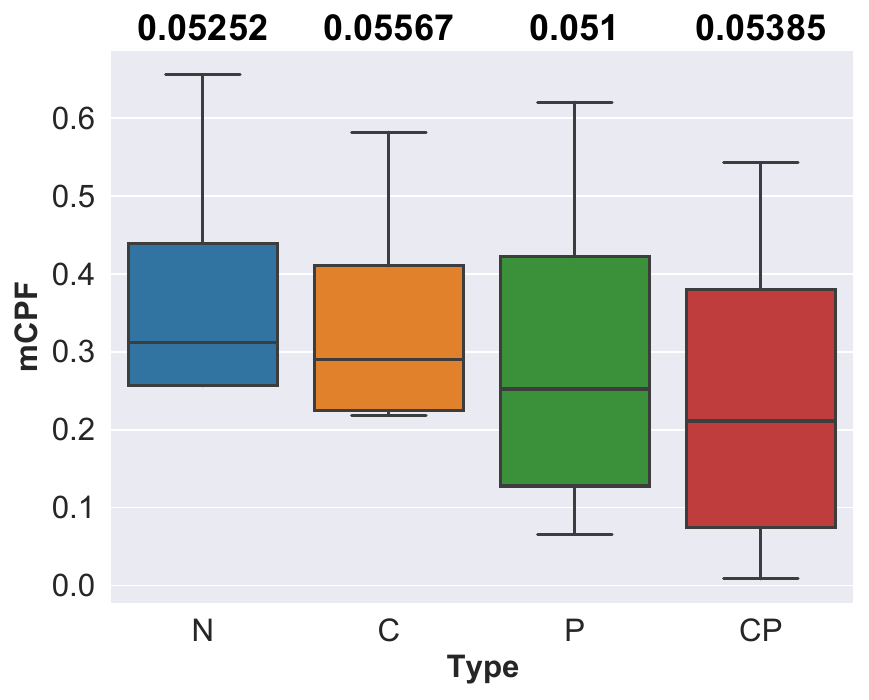}
    \label{fig:CPevalBoxPlot1_Gowalla}}
  \hfill
  \subfloat[LastFM]
    {\includegraphics[scale=0.24]{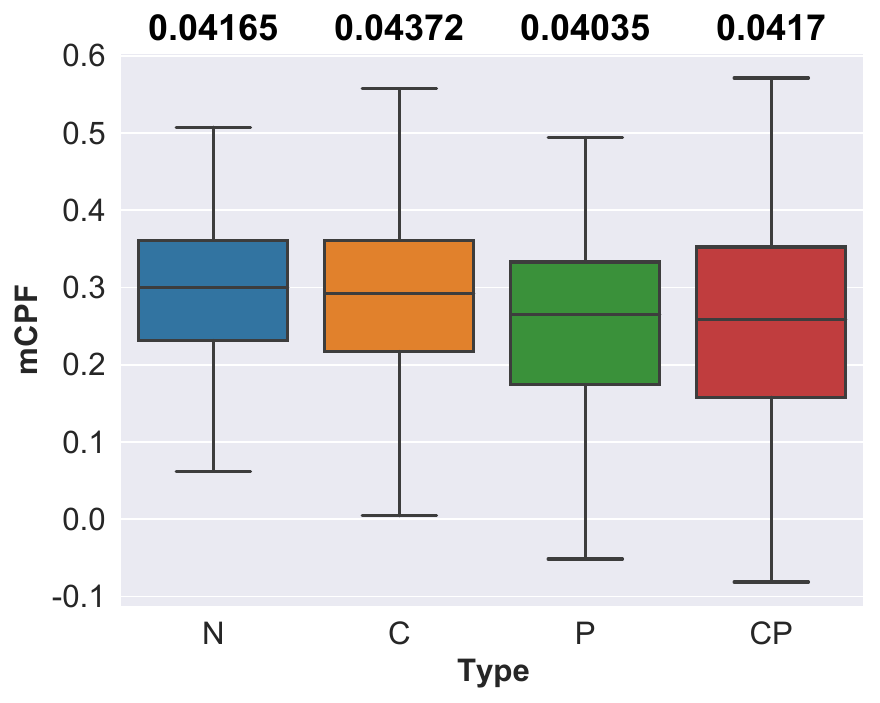}
    \label{fig:CPevalBoxPlot1_LastFM}}
  \hfill
  \subfloat[Foursquare]
    {\includegraphics[scale=0.24]{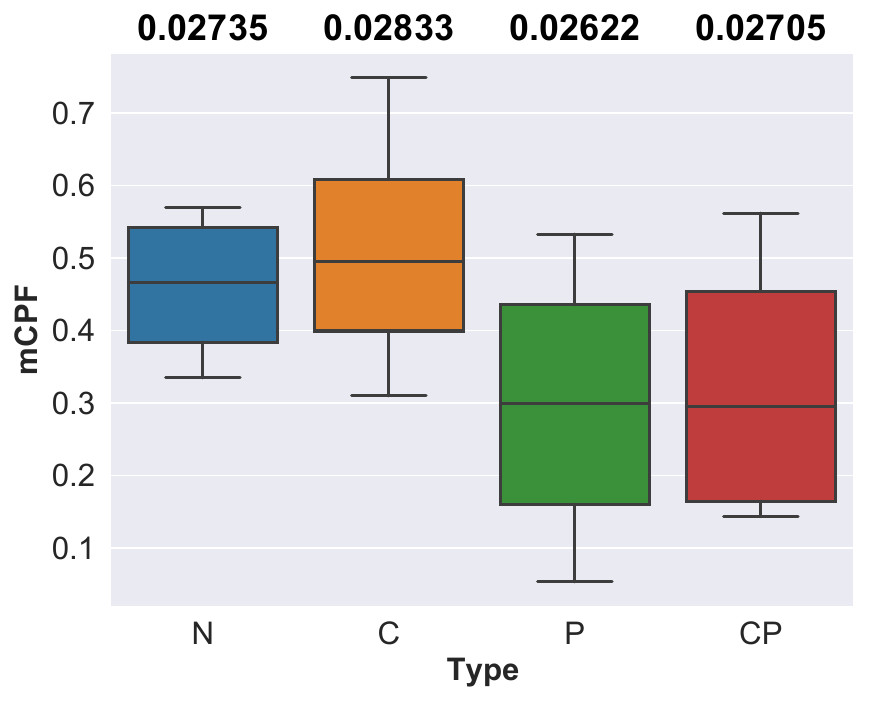}
    \label{fig:CPevalBoxPlot1_Fourquare}}
  \caption{Distributions of mCPF for fairness-unaware (N) and fairness-aware methods (i.e., C, P, and CP) on all 8 datasets. The numbers on top of each plot show the overall performance (i.e., nDCG@10 for all users) according to each fairness methodology.}
\label{fig:CPevalBoxPlotUG1}
\end{figure*}

\begin{figure*}
  \centering
  \subfloat[MovieLens]
    {\includegraphics[scale=0.24]{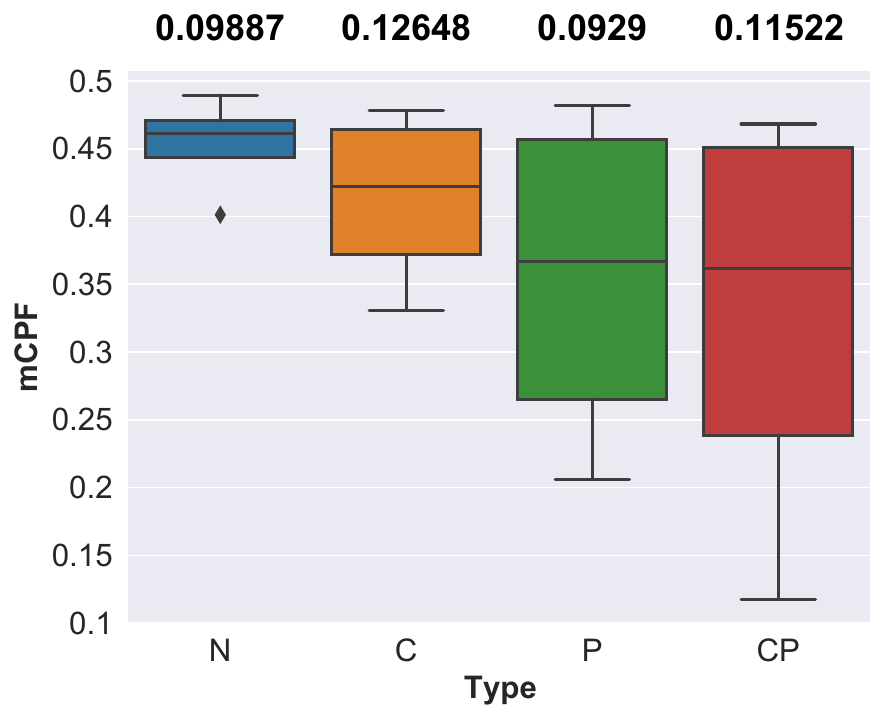}
    \label{fig:CPevalBoxPlot_MovieLens}}
  \hfill
  \subfloat[Epinion]
    {\includegraphics[scale=0.24]{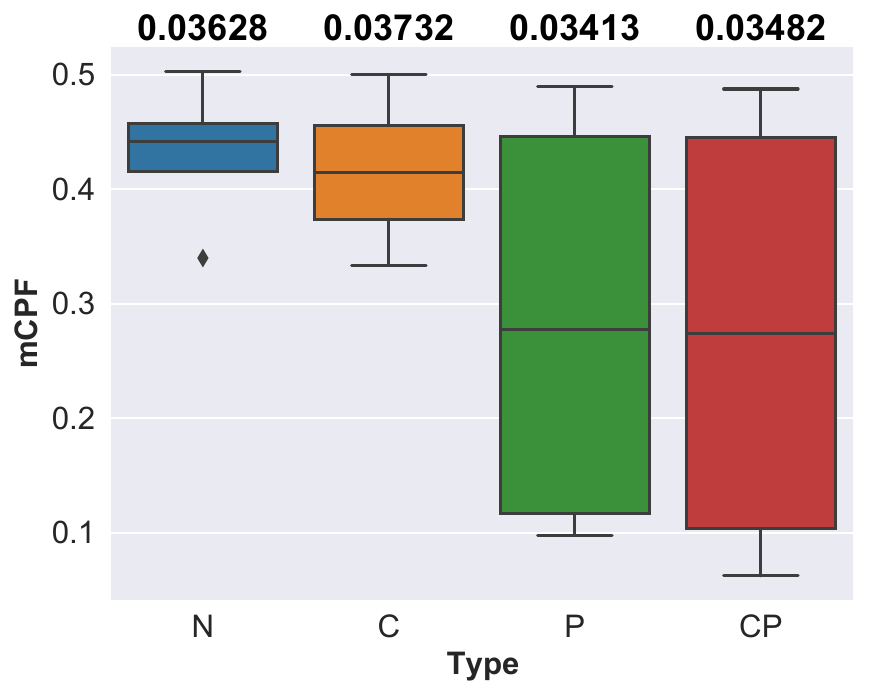}
    \label{fig:CPevalBoxPlot_Epinion}}
  \hfill
  \subfloat[AmazonToy]
    {\includegraphics[scale=0.24]{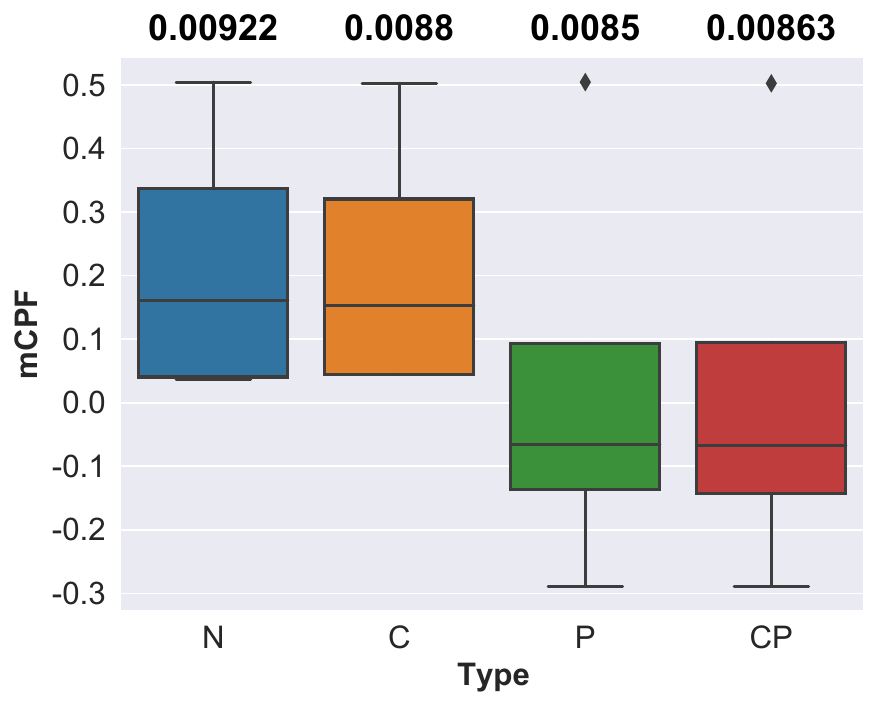}
    \label{fig:CPevalBoxPlot_AmazonToy}}
    \hfill
  \subfloat[AmazonOffice]
    {\includegraphics[scale=0.24]{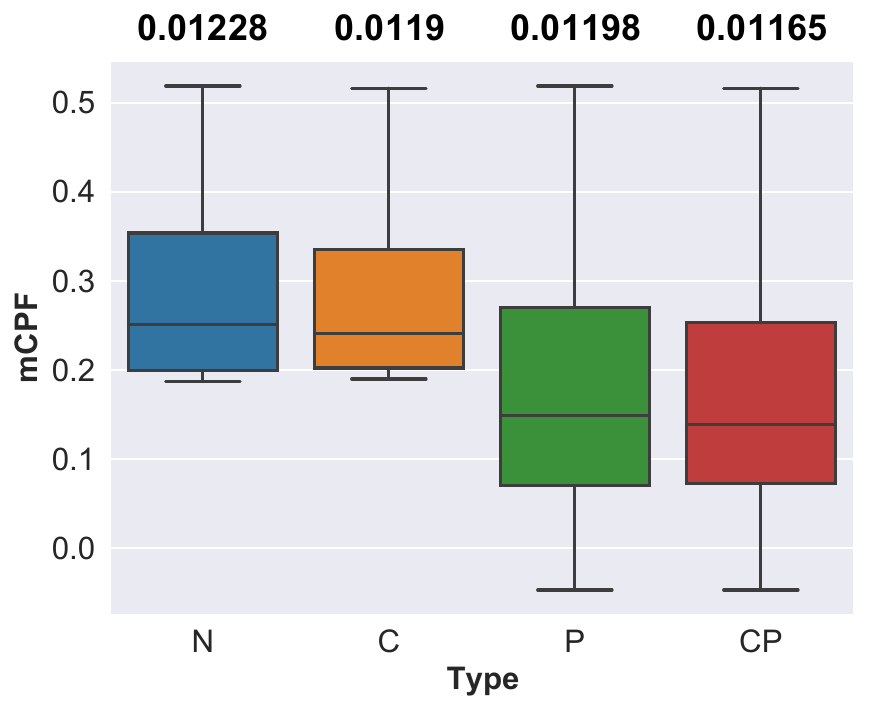}
    \label{fig:CPevalBoxPlot_AmazonOffice}}
   \hfill
  \subfloat[BookCrossing]
    {\includegraphics[scale=0.24]{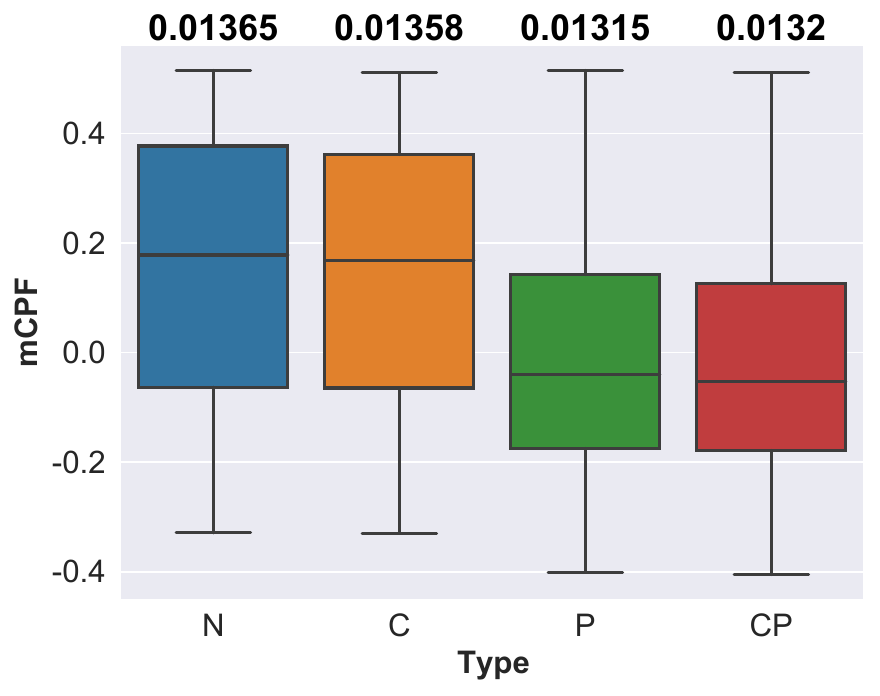}
    \label{fig:CPevalBoxPlot_BookCrossing}}
  \hfill
  \subfloat[Gowalla]
    {\includegraphics[scale=0.24]{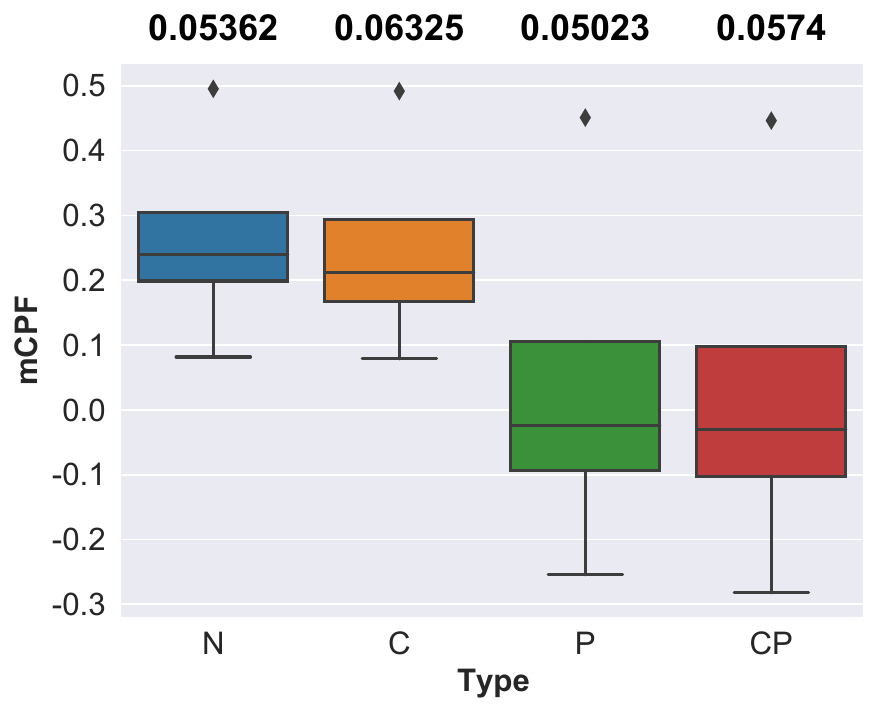}
    \label{fig:CPevalBoxPlot_Gowalla}}
  \hfill
  \subfloat[LastFM]
    {\includegraphics[scale=0.24]{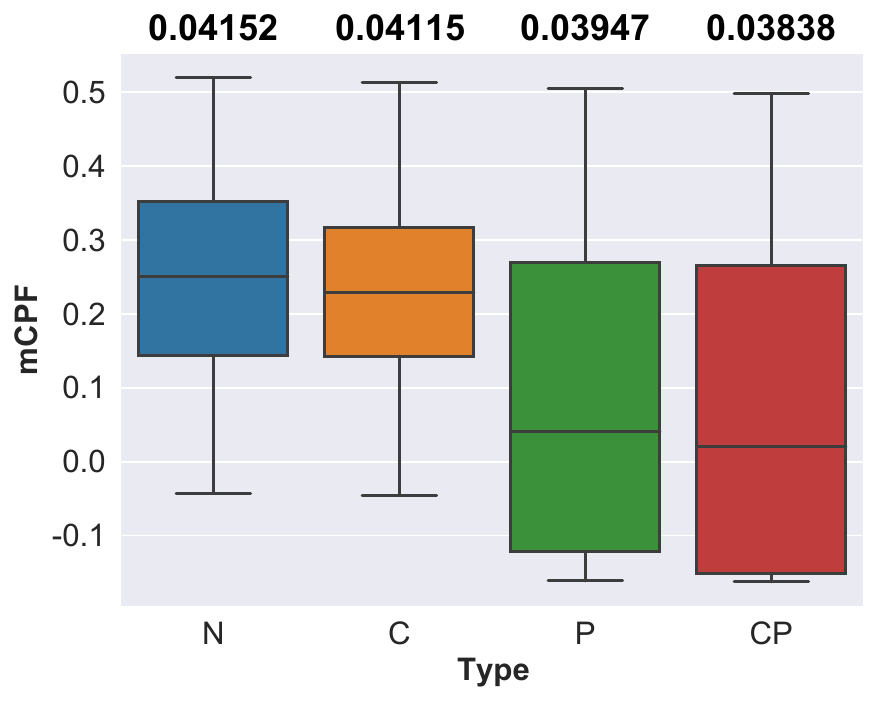}
    \label{fig:CPevalBoxPlot_LastFM}}
  \hfill
  \subfloat[Foursquare]
    {\includegraphics[scale=0.24]{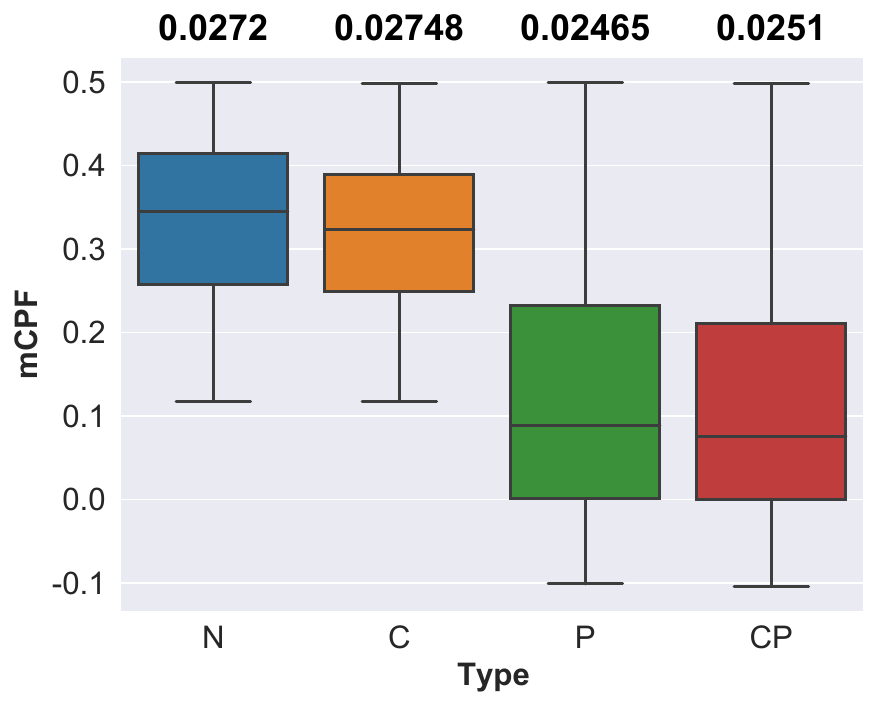}
    \label{fig:CPevalBoxPlot_Fourquare}}
  \caption{Distributions of mCPF for fairness-unaware (N) and fairness-aware methods (i.e., C, P, and CP) on all 8 datasets. The numbers on top of each plot show the overall performance (i.e., nDCG@10 for all users) according to each fairness methodology.}
\label{fig:CPevalBoxPlotUG2}
\end{figure*}

\begin{figure*}
  \centering
  \subfloat[User Groups]
    {\includegraphics[scale=0.24]{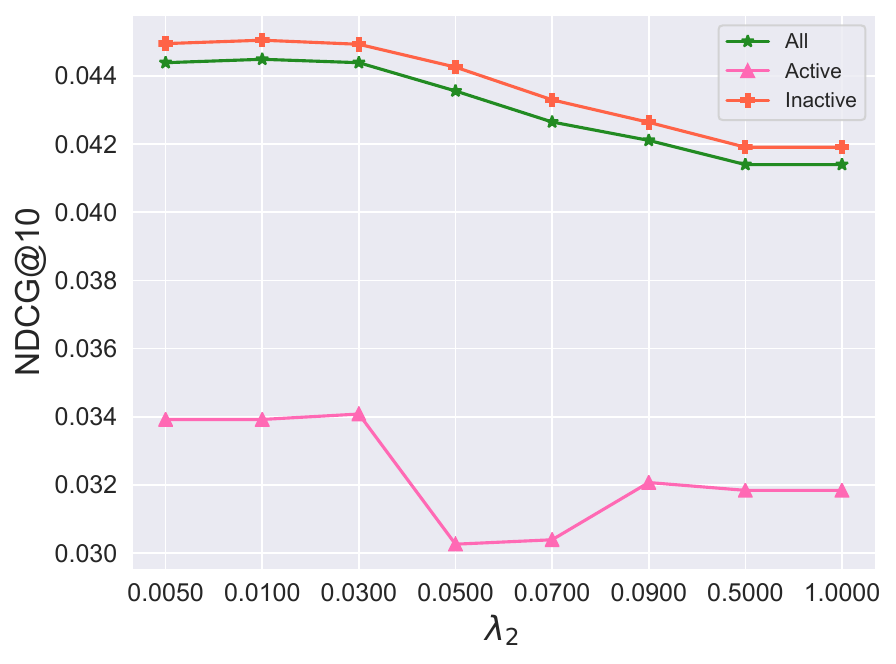}
    \label{fig:ablation_user_ueps}}
  \hfill
  \subfloat[User Groups]
    {\includegraphics[scale=0.24]{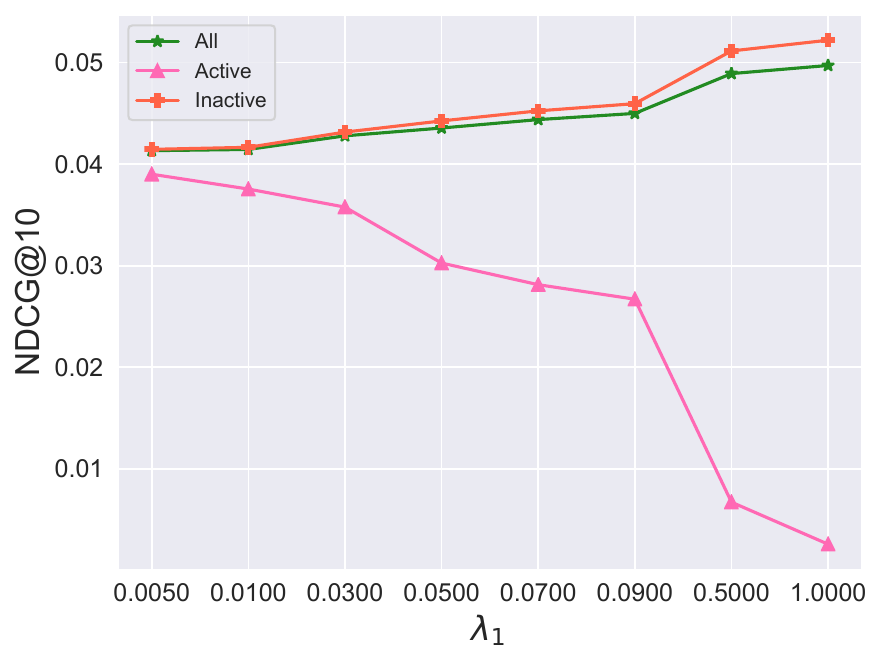}
    \label{fig:ablation_user_ieps}}
    \hfill
  \subfloat[Item Groups]
    {\includegraphics[scale=0.24]{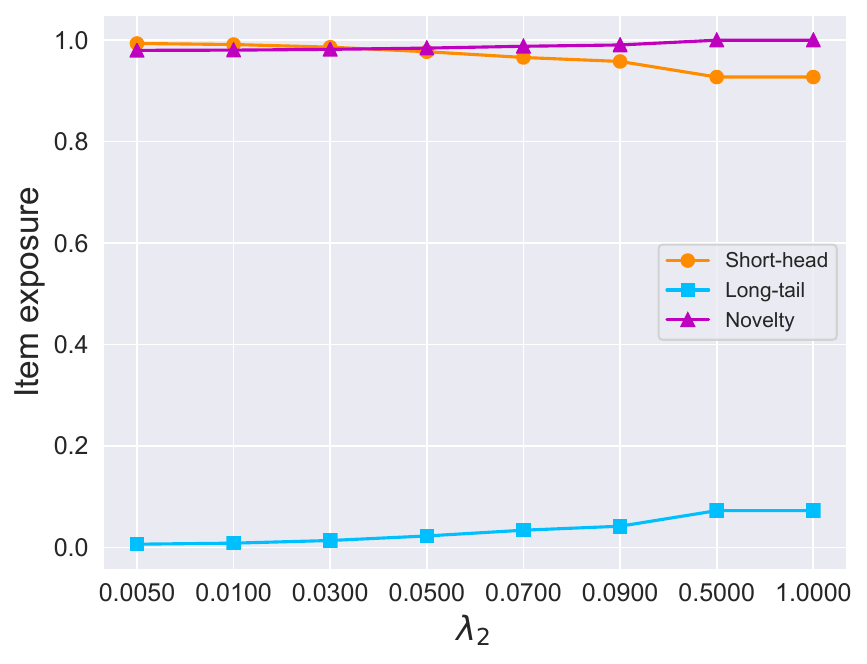}
    \label{fig:ablation_item_ueps}}
   \hfill
  \subfloat[Item Groups]
    {\includegraphics[scale=0.24]{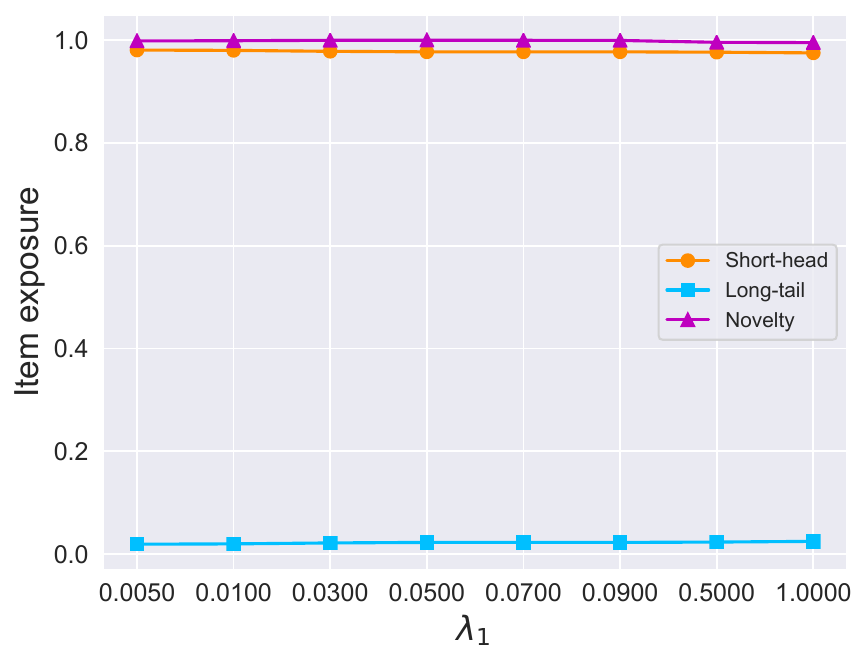}
    \label{fig:ablation_item_ieps}}
  \caption{The metric nDCG@10 and item exposure change on CP-fairness with respect to the $\lambda_1$ and $\lambda_2$ on all, \usergroupA and \usergroupB user groups and \itemgroupA and \itemgroupB item group on activity-level group (UG1). In each figure, the other $\lambda$ is equal to $0.05$.}
\label{fig:ablationLastFMNeuMFUG1}
\end{figure*}

\begin{figure*}
  \centering
  \subfloat[User Groups]
    {\includegraphics[scale=0.24]{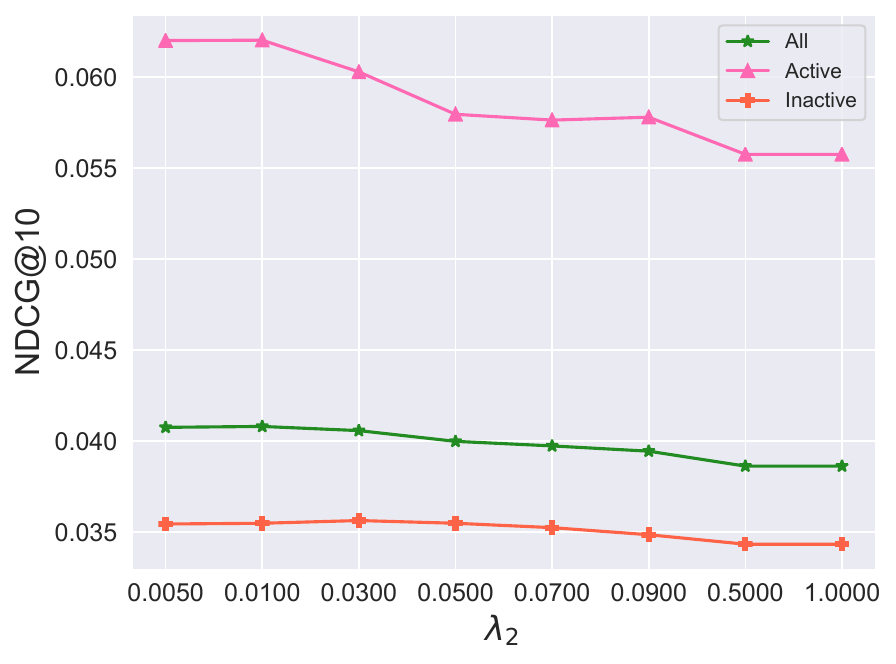}
    \label{fig:ablation_user_ueps}}
  \hfill
  \subfloat[User Groups]
    {\includegraphics[scale=0.24]{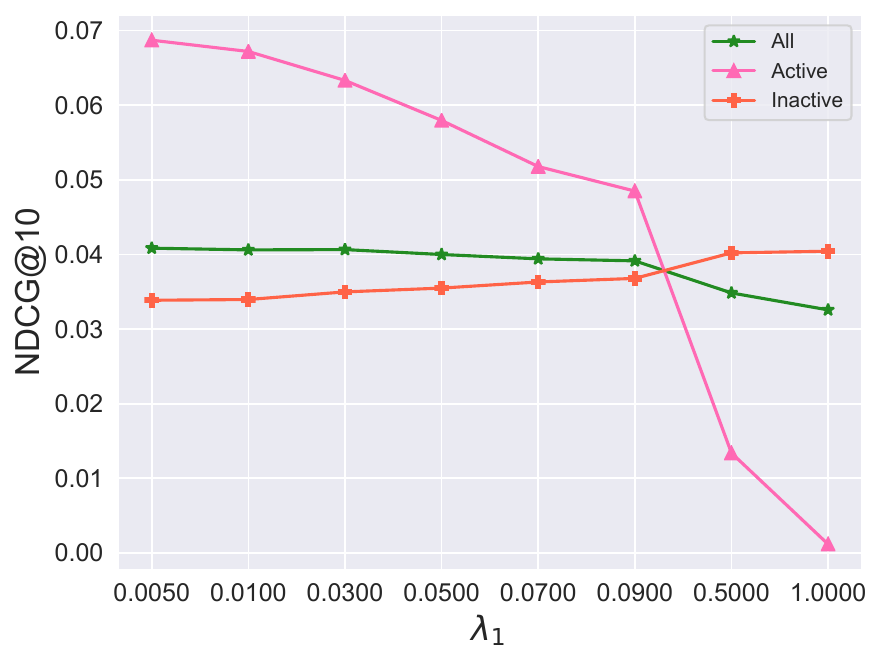}
    \label{fig:ablation_user_ieps}}
    \hfill
  \subfloat[Item Groups]
    {\includegraphics[scale=0.24]{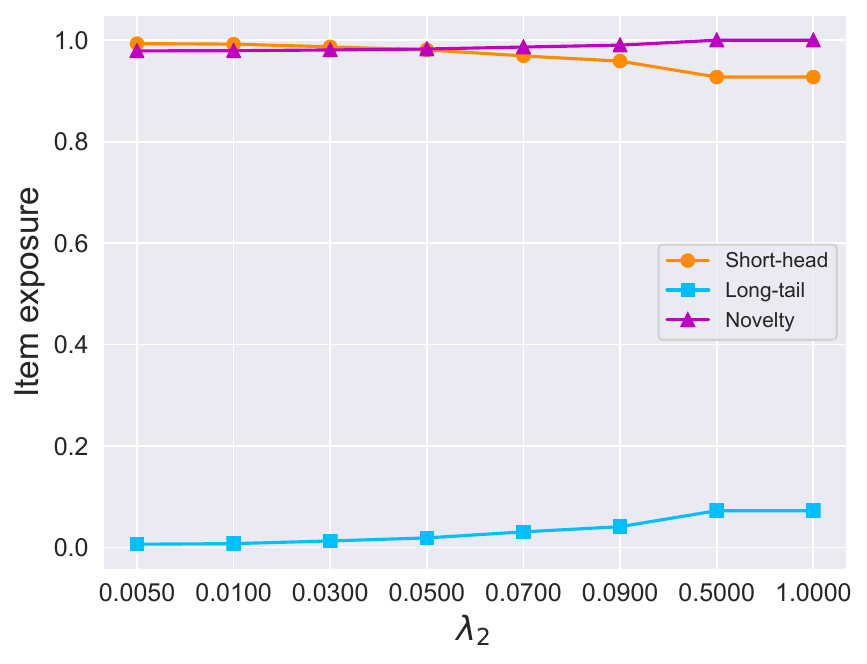}
    \label{fig:ablation_item_ueps}}
   \hfill
  \subfloat[Item Groups]
    {\includegraphics[scale=0.24]{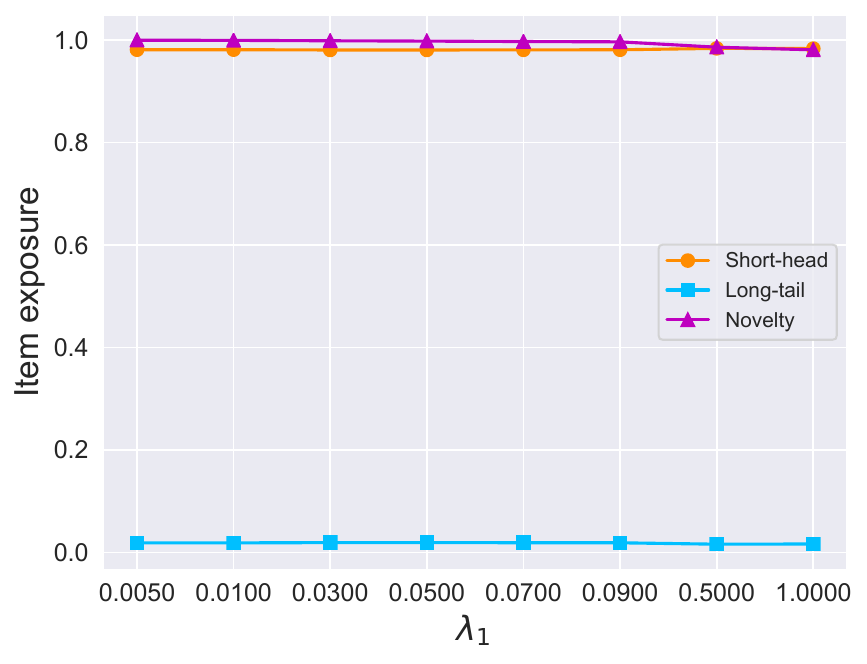}
    \label{fig:ablation_item_ieps}}
  \caption{The metric nDCG@10 and item exposure change on CP-fairness with respect to the $\lambda_1$ and $\lambda_2$ on all, \usergroupA and \usergroupB user groups and \itemgroupA and \itemgroupB item group on mainstream group (UG2). In each figure, the other $\lambda$ is equal to $0.05$.}
\label{fig:ablationLastFMNeuMFUG2}
\end{figure*}

The results presented in Figures \ref{fig:ablationLastFMNeuMFUG1} and \ref{fig:ablationLastFMNeuMFUG2} indicate that $\lambda_1$ exhibits a more \dquotes{accuracy-centric} behavior, implying that this parameter affects user-fairness and overall system accuracy with little impact on the items' expositions. The similarity in the type of items used by both active and inactive users suggests that there is no significant difference in their preferences for popular items. However, the $\lambda_2$ parameter showcases an ``exposure-centric'' behavior that can affect both the accuracy and beyond-accuracy of the system. An excessive increase in $\lambda_2$ may not necessarily improve overall accuracy as many long-tail items lack sufficient interaction data, leading to uncertainty in their potential impact on user satisfaction if included in recommendation lists. Thus, finding the optimal balance when selecting model parameters is crucial for enhancing the fairness and overall effectiveness of the marketplace \cite{wu2022multi}.

\subsection{Additional Insight: The Implications of the \texttt{CP-FairRank} on Items and Users within Subgroups}

\begin{figure}
  \centering
  \subfloat[Distribution of Utility of the top-$k$ rec. lists measured using nDCG.]
    {
    \includegraphics[scale=0.40]{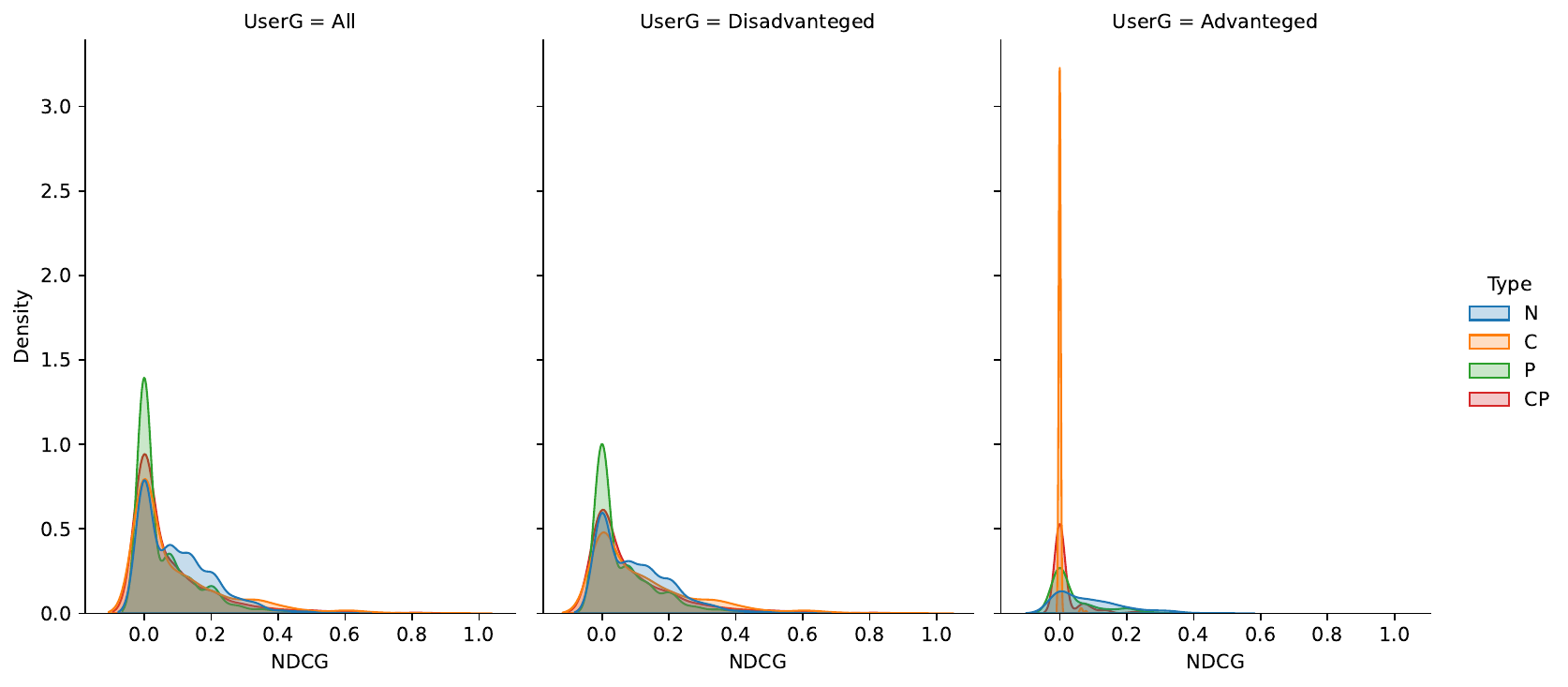}
    \label{fig:distribution_ndcg}
    }
  \hfill
  \subfloat[Distribution of Novelty of the top-$k$ rec.~lists]
    {
    \includegraphics[scale=0.40]{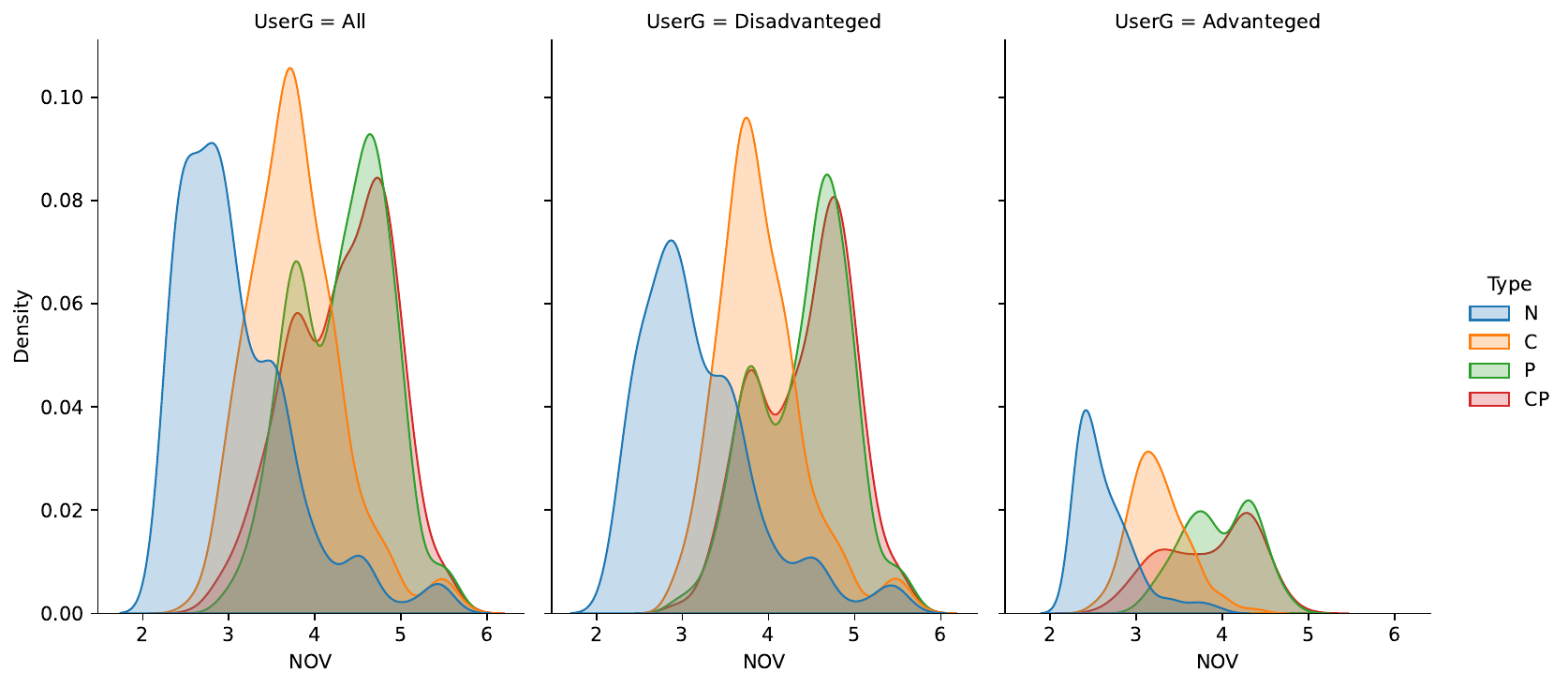}
    \label{fig:distribution_nov}
    }
    \caption{Distribution of the utility (a) and novelty (b) of the top-$k$ recommendation list across users}
      \label{fig:distribution}
\end{figure}

\updated{In this section, we study further the implications of our proposed CP method. We have previously demonstrated its prowess in enhancing both consumer and producer fairness without compromising, and sometimes even augmenting, the overall utility of recommendations. Drawing inspiration from \cite{wu2021tfrom}, where authors highlighted potential trade-offs between fairness evaluation objectives and overall recommendation utility, we aim to understand the potential impact of our algorithm on the exposure of items within various subgroups. Specifically, we aim to offer a meticulous and in-depth analysis regarding the impact on both advantaged and disadvantaged groups.}

\updated{Toward this goal, while Wu et al.~\cite{wu2021tfrom} employed metrics to track variations in item exposure and recommendation quality, we shift our focus to distribution representations. As illustrated in Figures \ref{fig:distribution_ndcg} and \ref{fig:distribution_nov} we present the distribution of recommendation accuracy (NDCG) across users, as well as the novelty distribution of recommended items for the entire users. It is noteworthy that we use novelty as a proxy for item exposure, complementing it as an additional metric beyond accuracy, in line with the exposure metric (recommendation count) employed in prior sections to measure provider fairness.}

\updated{Upon examining the second figure, it becomes evident that both methods, P and CP, effectively push the novelty of all items to the right, indicating an increase in item novelty. This observation holds true for both advantaged and disadvantaged groups, as well as overall. Interestingly, even when comparing C with N, method C improves the novelty of recommended items, despite not being explicitly designed for this purpose. These findings suggest that under our proposed method, producers benefit from increased exposure to their less popular items, and fairness is achieved at the group level, as demonstrated in earlier sections. In terms of the impact on the novelty of items, the hierarchy can be summarized as \textbf{$CP=P>C>N$}. Shifting our attention to the primary query: does this enhancement in fairness and novelty come at the cost of recommendation quality, especially when considering specific user subgroups? An analysis of Fig.~\ref{fig:distribution} reveals that methods C, P, and CP all experience a decline in recommendation quality for both groups. However, the decline is most pronounced for the C method. Hence, when focusing on recommendation quality, the methods can be ranked as \textbf{$N>CP=P>C$.}}

\updated{Overall, our study confirms the previous findings that there exists an inherent trade-off between recommendation quality and the attainment of fairness in recommendation systems. However, our proposed method adeptly addresses the shortcomings of the consumer fairness method proposed by Li et al. \cite{li2021user}, effectively minimizing the adverse impacts associated with this trade-off.}
\section{Conclusions}
\label{sec:conclusion}
In this paper, we study fairness-aware recommendation algorithms from both the user and item perspectives. We first show that current recommendation algorithms produce unfair recommendations between different disadvantaged user and item groups due to the natural biases and imperfections in the underlying user interaction data. \newtxt{To address these issues, we propose a personalized CP-fairness-constrained re-ranking method to compensate for the biases against long-tail items and cold-start users, which are not necessarily groups defined by sensitive attributes. Our method seeks to balance the representation in the recommendation space, while maintaining the recommendation quality.} 
Extensive experiments across eight datasets on two different user grouping methods, namely, activity-level and mainstreamness, indicate that our method can reduce the unfair outcomes on beneficiary stakeholders, consumers and providers while improving the overall recommendation quality. This demonstrates that algorithmic fairness may play a critical role in mitigating data biases that, if left uncontrolled, can result in societal discrimination and polarization of opinions and retail businesses. \newtxt{Although this study has focused on specific user and item groupings, we postulate that our findings could extend to other groupings (e.g., based on sensitive attributes) that introduce substantial disparities, as our re-ranking method's objective function is designed to optimize a balance between fairness and accuracy, though this could come at the expense of accuracy depending on the particularities of the grouping methodology and the feasibility of achieving such a balance.} Hence, in the future, we intend to extend this work by analyzing other fine-grained fairness circumstances, such as \textit{mis-calibration} and \textit{individual} fairness, to determine if our system is capable of addressing fairness constraints on these new definitions of fairness and grouping methodology. \updated{Moreover, we anticipate refining our model to better address the complexities of multi-sided fairness, in recognition of the significant challenges this presents, as highlighted in the discussion on uni-sided and multi-sided fairness optimization. Furthermore, we recognize the necessity of testing our algorithms on larger, more diverse datasets in future studies, as this will provide further insights into their scalability and effectiveness in real-world, large-scale scenarios \cite{rahmani2022exploring,deldjoo2018content}, e.g., in temporal settings \cite{rahmani2022exploring}.} In addition, we believe it would be interesting to analyze the biases induced in each domain based on dataset characteristics (such as popularity bias, the user gini coefficient representing user activity distribution, and sparsity) and to explain the results based on these various characteristics (see, for example, our recent research in this line~\cite{deldjoo2021explaining}). Another research domain that we consider highly pertinent to this discussion is the study of fairness in the context of generative AI and large language models (LLMs) applied to recommender systems and machine learning at large \cite{deldjoo2024understanding,deldjoo2023fairnesschatgpt,DBLP:conf/recsys/ZhangBZWF023, Nazary2023ChatGPT,weidinger2021ethical}. Given that these systems are trained on vast amounts of internet data, it becomes crucial to examine their potential biases and to consider strategies for mitigating social and ethical risks, as well as optimizing overall performance.

\subsection{Limitations}
\newtxt{Despite proposing a practical system to seamlessly and jointly address CO-Fairness in recommendations, like any research, our work is not without limitations. The primary constraints of this study, particularly the selection of \textit{fairness baselines,} are discussed in detail in the subsequent sections.}

\newtxt{Finding suitable baselines for our work presented several challenges, including 
 (i) the absence of methods that holistically address CP-Fairness (most research has been focused on single-stakeholder constraints), and (ii) varying interpretations and assumptions on \dquotes{what is fair}. For the first point, most existing work either focuses on evaluating fairness on CP, often without coherent algorithmic integration \cite{deldjoo2023fairness,amigo2023unifying}. }

\newtxt{The second concern,  defining ``what is fairness'' in a group setting in recommendation algorithms, as highlighted in \cite{deldjoo2023fairness} and further underscored by \citet{kirnap2021estimation}, is multifaceted and diverse. In the RecSys literature, we can broadly consider the following notions:} 

\begin{enumerate}
    \item \newtxt{\textbf{Fair target representation}: This is where fairness hinges on representing targets (such as user groups, item genres, etc.) in a manner that is either equal (everyone gets the same representation) or proportional (representation is aligned with some inherent proportion in the dataset or the real world). The majority of algorithms focused on fairness enhancement, irrespective of the stakeholder, belong to this category.}
    
    \item \newtxt{\textbf{Guaranteed allocation}: This perspective ensures that every identifiable group receives a spot in the recommendation list, thus ensuring that no group is entirely left out or overlooked. It is rooted in the principle that fairness is not only about balanced representation but also about preventing complete absence, ensuring that every group, regardless of its size or nature, is acknowledged. For instance, the work \citet{wu2021tfrom} working on CP-fairness operates on this principle.}
    
\end{enumerate}

\newtxt{Our CP framework primarily focuses on the first approach, that is, achieving fairness through fair target representation. Integrating other CP-Fairness perspectives such \cite{wu2021tfrom}, which belongs to the guaranteed allocation category, into our framework is not straightforward. Such an adaptation would necessitate a comprehensive revision of how we define fairness in our method. For these reasons, we decided to compare our proposed fair CP re-ranking algorithm with unilateral fairness models, C and P-fairness re-ranking on top of the core base ranker model to show how our method can achieve the desirable performance on fairness metrics in both stakeholders and overall recommendation performance (accuracy and beyond-accuracy). Specifically, the C variation of our system targets consumer-side fairness objectives, aligned with the approach introduced by~\citet{li2021user}. Our approach incorporates users' parity objectives in an unconstrained optimization problem and employs a greedy algorithm for optimal polynomial-time solutions. On the other hand, the P variation emphasizes provider-side fairness objectives.}


\bibliographystyle{ACM-Reference-Format}
\bibliography{sample-base}


\end{document}